\documentclass[a4paper,12pt]{article}%
\usepackage{amssymb}
\usepackage{natbib}
\usepackage{algorithmic}
\usepackage{amsfonts}
\usepackage[noblocks]{authblk}
\usepackage{amsmath}
\usepackage[nohead]{geometry}
\usepackage[singlespacing]{setspace}
\usepackage[bottom]{footmisc}
\usepackage{indentfirst}
\usepackage{endnotes}
\usepackage{graphicx}%
\usepackage{rotating}
\usepackage{amsthm}
\usepackage{subcaption}
\usepackage{booktabs}
\usepackage{tikz}
\usepackage{algorithm}
\usepackage{comment}
\usepackage{array}
\doublespacing
\bibliographystyle{apalike}
\pagestyle{plain}
\newcommand{\be} {\begin{eqnarray*}}
\newcommand{\ee} {\end{eqnarray*}}

\newcommand{\argmin}{\mathop{\rm argmin~}}

\newcommand{\bfz}{{\bf z}}

\newtheorem{theorem}{Theorem}[section]
\newtheorem{lemma}[theorem]{Lemma}

\newtheorem{rem}{Remark}[section]

\newtheorem{definition}{Definition}[section]

\small
\geometry{left=1in,right=1in,top=1in,bottom=1in}

\usepackage[colorlinks,bookmarksopen,bookmarksnumbered,citecolor=blue,urlcolor=blue,linkcolor=blue]{hyperref}

\begin{document}
\title{\ Sequential Gradient Descent and Quasi-Newton's Method for Change-Point Analysis}
\author{Xianyang Zhang\footnote{Address correspondence to Xianyang Zhang (zhangxiany@stat.tamu.edu).}}
\author{Trisha Dawn}
\affil{Texas A\&M University}
\date{\normalsize }
\maketitle

\singlespacing

\textbf{Abstract.} One common approach to detecting change-points is minimizing a cost function over possible numbers and locations of change-points. The framework includes several well-established procedures, such as the penalized likelihood and minimum description length. Such an approach requires finding the cost value repeatedly over different segments of the data set, which can be time-consuming when (i) the data sequence is long and (ii) obtaining the cost value involves solving a non-trivial optimization problem. This paper introduces a new sequential method (SE) that can be coupled with gradient descent (SeGD) and quasi-Newton's method (SeN) to find the cost value effectively. The core idea is to update the cost value using the information from previous steps without re-optimizing the objective function. The new method is applied to change-point detection in generalized linear models and penalized regression. Numerical studies show that the new approach can be orders of magnitude faster than the Pruned Exact Linear Time (PELT) method without sacrificing estimation accuracy. \\
\textbf{Keywords.} Change-point detection, Dynamic programming, Generalized linear models, Penalized linear regression, Stochastic gradient descent.

\section{Introduction}
Change-point analysis is concerned with detecting and locating structure breaks in the underlying model of a data sequence. The first work on change point analysis goes back to the 1950s, where the goal was to locate a shift in the mean of an independent and identically distributed Gaussian sequence for industrial quality control purposes \citep{page1954continuous,page1955test}. 
Since then,  change-point analysis has generated important activity in statistics and various application settings such as signal processing, climate science, economics, financial analysis, medical science, and bioinformatics. %Modern applications in monitoring complex systems have also motivated recent developments in the machine learning community. 
We refer the readers to \cite{brodsky1993nonparametric,csorgo1997limit,tartakovsky2014sequential} for book-length treatments and \cite{aue2013structural,niu2016multiple,aminikhanghahi2017survey,truong2020selective,liu2021high} for reviews on this subject.

There are two main branches of change-point detection methods: online methods that aim to detect changes as early as they occur in an online setting and offline methods that retrospectively detect changes when all samples have been observed. The focus of this paper is on the offline setting. A typical offline change-point detection method involves three major components: the cost function, the search method, and the penalty/constraint \citep{truong2020selective}. The choice of the cost function and search method has a crucial impact on the method's computational complexity. As increasingly larger data sets are being collected in modern applications, there is an urgent need to develop more efficient algorithms to handle such big data sets. Examples include testing the structure breaks for genetics data and detecting changes in the volatility of big financial data.

One popular way to tackle the change-point detection problem is to cast it into a model-selection problem by solving a penalized optimization problem over possible numbers and locations of change-points. The framework includes several well-established procedures, such as the penalized likelihood and minimum description length. The corresponding optimization can be solved exactly using dynamic programming \citep{auger1989algorithms,jackson2005algorithm} whose computational cost is $\sum^{T}_{t=1}\sum_{s=1}^{t}q(s)$, where $T$ is the number of data points and $q(s)$ denotes the time
complexity for calculating the cost function value based on $s$ data points. \cite{killick2012optimal} introduced the pruned exact linear time (PELT) algorithm with a pruning step in dynamic programming. PELT reduces the computational cost without affecting the exactness of the resulting segmentation.
\cite{rigaill2010pruned} proposed an alternative pruned dynamic programming algorithm with the
aim of reducing the computational effort.
However, in the worst case scenario, the computational cost of dynamic programming coupled with the above pruning strategies remains the order of $O(\sum^{T}_{t=1}\sum_{s=1}^{t}q(s))$.

Unlike the pruning strategy, this paper aims to improve the computational efficiency of dynamic programming from a different perspective. We focus on the class of problems where the cost function involves solving a non-trivial optimization problem without a closed-form solution. Dynamic programming requires repeatedly solving the optimization over different data sequence segments, which can be very time-consuming for big data. This paper makes the following contributions to address the issue.
\begin{enumerate}
    \item A new sequential updating method (SE) that can be coupled with the gradient descent (SeGD) and quasi-Newton's method (SeN) is proposed to update the parameter estimate and the cost value in dynamic programming. The new strategy avoids repeatedly optimizing the objective function based on each data segment. It thus significantly improves the computational efficiency of the vanilla PELT, especially when the cost function involves solving a non-trivial optimization problem without a closed-form solution. Though our algorithm is no longer exact, numerical studies suggest that the new method achieves almost the same estimation accuracy as PELT does.
    
    \item SeGD is related to the stochastic gradient descent (SGD) without-replacement sampling \citep{shamir2016without,nagaraj2019sgd,rajput2020closing}. The main difference is that our update is along the time order of the data points, and hence no sampling or additional randomness is introduced. Using some techniques from SGD and transductive learning theory, we obtain the convergence rate of the approximate cost value derived from the algorithm to the true cost value.
    
    \item The proposed method applies to a broad class of statistical models, such as parametric likelihood models, generalized linear models, nonparametric models, and penalized regression. %Our method is implemented in an R package allowing different user-specified cost functions.
\end{enumerate}

Finally, we mention two other routes to reduce the computational complexity in change-point analysis. The first one is to relax the $l_0$ penalty on the number of parameters to an $l_1$ penalty (such as the total variation penalty) on the parameters to encourage a piece-wise constant solution. The resulting convex optimization problem can be solved in nearly linear time \citep{harchaoui2010multiple}. In contrast, our method directly tackles the problem with the $l_0$ penalty. The second approach includes different approximation schemes, including window-based methods, binary segmentation and its variants \citep{vost1981,fryzlewicz2014wild}, and bottom-up segmentation \citep{ keogh2001online}. These methods are usually quite efficient and can be combined with various test statistics though they only provide approximate solutions. Our method can be regarded as a new approximation scheme for the $l_0$ penalization problem.

\begin{table}
\centering
{\renewcommand{\arraystretch}{1.5}
\begin{tabular}{l l l l}
& Dynamic programming & PELT & SE\\
\hline 
Time complexity & $\sum^{T}_{t=1}\sum_{s=1}^{t}q(s)$ & $\sum^{T}_{t=1}\sum_{s\in R_t}q(s)$  & $q_0\sum^{T}_{t=1}|R_t|$\\
\hline
\end{tabular}
}
\caption{Comparison of the computational complexity. Here $q(s)$ denotes the time complexity for calculating the cost value based on $s$ data points and $q_0$ is the time complexity for performing the one-step update described in Section \ref{sec:method}.
}\label{tb1}
\end{table}

The rest of the paper is organized as follows. In Section \ref{sec:review}, we briefly review the dynamic programming and the pruning scheme in change-point analysis. We describe the details of the Se algorithms in Section \ref{sec:method}, including the motivation, its application in generalized linear models, and an extension to handle the case where the cost value
involves solving a penalized optimization. We study the convergence property of the algorithm in Section \ref{sec:theory}.
Sections \ref{sec:sim} presents numerical results for synthesized and real data. Section \ref{sec:con} concludes.

\section{Dynamic Programming and Pruning}\label{sec:review}
\subsection{Dynamic programming}
Change-point analysis concerns the partition of a data set ordered by time (space or other variables) into piece-wise homogeneous segments such that each piece shares the same behavior. Specifically, we denote the data by $\bfz=(z_1,\dots,z_T)$. For $1\leq s\leq t\leq T$, let $\bfz_{s:t}=(z_s,\dots,z_t)$. If we assume that there are $k$ change-points in the data, then we can split the data into $k+1$ distinct segments. We let the location of the $j$th change-point be $\tau_j$ for $j=1,2,\dots,k,$ and set $\tau_0=0$ and $\tau_{k+1}=T.$  The $(j+1)$th segment contains the data $z_{\tau_{j}+1},\dots,z_{\tau_{j+1}}$ for 
$j=0,1,\dots,k$. We let $\boldsymbol{\tau}=(\tau_1,\dots,\tau_k)$ be the set of change-point locations. The problem we aim to address is to infer both the number of change points and their locations.

Throughout the discussions, we let $C(\bfz_{s+1:t})$ for $s<t$ denote the cost for a segment consisting of the data points $z_{s+1},\dots,z_t$. Of particular interest is the cost function defined as  
\begin{align}\label{eq}
C(\bfz_{s+1:t})=\min_{\theta\in\Theta}\sum^{t}_{i=s+1}l(z_i,\theta)
\end{align}
where $l(\cdot,\theta)$ is the individual cost parameterized by $\theta$ that belongs to a compact parameter space $\Theta\subset \mathbb{R}^d$. Examples include (i) $l(\cdot,\theta)$ is the negative log-likelihood of $z_i$; (2) $l(z_i,\theta)=L(f(x_i,\theta),y_i)$ with $z_i=(x_i,y_i)$, where $L$ is a loss function and $f(\cdot,\theta)$ is an unknown regression function parameterized by $\theta$. See more details and discussions in Section \ref{sec:model}. 

In this paper, we consider segmenting data by solving a penalized optimization problem. For $0\leq k\leq T-1$, define
\begin{align*}
C_{k,T}=\min_{\boldsymbol{\tau}}\sum^{k}_{j=0}C(\bfz_{\tau_j+1:\tau_{j+1}}).
\end{align*}
We estimate the number of change-points by minimizing
a linear combination of the cost value and a penalty function $f$, i.e.,
$$\min_k\{C_{k,T}+f(k,T)\}.$$
If the penalty function is linear in $k$ with $f(k,T)=\beta_T (k+1)$ for some $\beta_T>0,$ then we can write the objective function as
\begin{align*}
\min_k \left\{C_{k,T}+f(k,T)\right\}=\min_{k,\boldsymbol{\tau}}\sum^{k}_{j=0}\left\{C(\bfz_{\tau_j+1:\tau_{j+1}})+\beta_T\right\}.    
\end{align*}
One way to solve the penalized optimization problem is through the dynamic programming approach \citep{killick2012optimal,jackson2005algorithm}.
Consider segmenting the data $\bfz_{1:t}$. Denote $F(t)$ to be the minimum value of 
the penalized cost $\min_k \left\{C_{k,T}+f(k,T)\right\}$ for segmenting such data. We derive a recursion for $F(t)$ by conditioning on the last change-point location,
\begin{align}
F(t)=&\min_{k,\boldsymbol{\tau}}\sum^{k}_{j=0}\left\{C(\bfz_{\tau_j+1:\tau_{j+1}})+\beta_T\right\} \nonumber
\\=&\min_{k,\boldsymbol{\tau}}\left[\sum^{k-1}_{j=0}\left\{C(\bfz_{\tau_j+1:\tau_{j+1}})+\beta_T\right\}+C(\bfz_{\tau_k+1:t})+\beta_T\right] \nonumber
\\=&\min_{\tau}\left[\min_{\tilde{k},\boldsymbol{\tau}}\sum^{\tilde{k}}_{j=0}\left\{C(\bfz_{\tau_j+1:\tau_{j+1}})+\beta_T\right\}+C(\bfz_{\tau+1:t})+\beta_T\right] \nonumber
\\=&\min_{\tau}\left\{F(\tau)+C(\bfz_{\tau+1:t})+\beta_T\right\}, \label{eq1}
\end{align}
where $\tau_{k+1}=t$ in the first equation and $\tau_{\tilde{k}+1}=\tau$ in the third equation. 
The segmentations can be recovered by taking the argument $\tau$ which minimizes (\ref{eq1}), i.e.,
\begin{align}\label{eq-taustar}
\tau^*=\argmin_{0\leq \tau<t}\left\{F(\tau)+C(\bfz_{\tau+1:t})+\beta_T\right\},    
\end{align}
which gives the optimal location of the last change-point in the segmentation of $\bfz_{1:t}$. The procedure is repeated until all the change-point locations are identified.

\begin{figure}[H]
\centering
  \centering
  \includegraphics[width=0.7\linewidth]{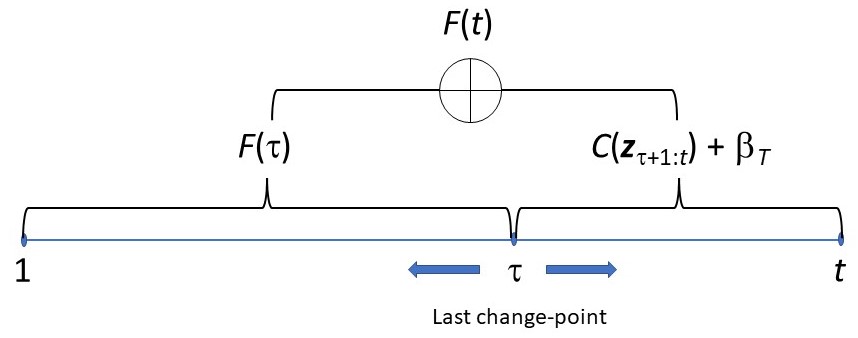}
\caption{Illustration of dynamic programming in change-point detection.}
\label{fig:dynam}
\end{figure}

\subsection{Pruning}
A popular way to increase the efficiency of dynamic programming is by pruning the candidate set for finding the last change-point in each iteration. 
For the cost function in (\ref{eq}), we have for any $\tau<t<T$, $C(\bfz_{\tau+1:t})+C(\bfz_{t+1:T})\leq C(\bfz_{\tau+1:T})$.
%\begin{align*}
%C(\bfz_{s+1:t})+C(\bfz_{t+1:T})\leq %C(\bfz_{s+1:T}).    
%\end{align*}
\cite{killick2012optimal} showed that for some $t>\tau$ if
\begin{align*}
F(\tau)+C(\bfz_{\tau+1:t})>F(t),
\end{align*}
then at any future point $t'>t$, $\tau$ can never be the optimal location of the most recent change-point prior to $t'$. Define a sequence of sets $\{R_t\}^{T}_{t=1}$ recursively as
\begin{align*}
R_t=\left\{\tau\in R_{t-1}\cup\{t-1\}: F(\tau)+C(\bfz_{\tau+1:t-1})\leq F(t-1)\right\}.    
\end{align*}
Then $F(t)$ can be computed as
\begin{align*}
F(t)=\min_{\tau\in R_{t}}\left\{F(\tau)+C(\bfz_{\tau+1:t})+\beta\right\}    
\end{align*}
and the minimizer $\tau^*$ in (\ref{eq-taustar}) belongs to $R_t$. This pruning technique forms the basis of the Pruned Exact Linear Time (PELT) algorithm. Under suitable conditions that allow the expected
number of change-points to increase linearly with $T$, \cite{killick2012optimal} showed that the expected computational cost  for PELT is bounded by $LT$ for some constant $L<\infty.$ 
In the worst case where no pruning occurs, the computational cost of PELT is the same as the vanilla dynamic programming.

\section{Methodology}\label{sec:method}
\subsection{Sequential algorithms}
For large-scale data, the computational cost of PELT can still be prohibitive due to the burden of repeatedly solving the optimization problem (\ref{eq}). For many statistical models, the time complexity for obtaining $C(\bfz_{s+1:t})$ is linear in the number of observations $t-s$. Therefore, in the worst-case scenario, the overall time complexity can be as high as $O(T^3)$. 
To alleviate the problem, we propose a fast algorithm by sequentially updating the cost function using a gradient-type method to reduce the computational cost while maintaining similar estimation accuracy. Instead of repeatedly solving the optimization problem to obtain the cost value for each data segment, we propose to update the cost value using the parameter estimates from the previous intervals. As the new method sequentially updates the parameter, we name it the sequential algorithm (SE).

We derive the algorithm here based on a heuristic argument. A rigorous justification for the convergence of the algorithm is given in Section \ref{sec:theory}. Suppose we have calculated $\hat{\theta}_{\tau+1:t-1}$, the approximation to $\tilde{\theta}_{\tau+1:t-1}$ that minimizes the cost function based on the data segment $\bfz_{\tau+1:t-1}$. We want to find the cost value for the next data segment $\bfz_{\tau+1:t}$, 
\begin{align}\label{eq-3}
C(\bfz_{\tau+1:t})=&\min_{\theta\in\Theta}\sum^{t}_{i=\tau+1}l(z_i,\theta),
\end{align}
where $\tau\geq 0$ and $t\leq T.$ Assume that $l(z,\theta)$ is twice differentiable in $\theta$.
Let $\tilde{\theta}_{\tau+1:t}$ be the minimizer of (\ref{eq-3}), which satisfies the first order condition (FOC)
\begin{align}
&\sum^{t}_{i=\tau+1}\nabla l(z_i,\tilde{\theta}_{\tau+1:t})=0. \label{s-2}
\end{align}
Taking a Taylor expansion around $\hat{\theta}_{\tau+1:t-1}$ in the FOC (\ref{s-2}), we obtain
\begin{align*}
0=&\sum^{t}_{i=\tau+1}\nabla l(z_i,\tilde{\theta}_{\tau+1:t})
\\ \approx &\sum^{t}_{i=\tau+1}\nabla l(z_i,\hat{\theta}_{\tau+1:t-1})+
\sum^{t-1}_{i=\tau+1}\nabla^2 l(z_i,\hat{\theta}_{\tau+1:t-1})(\tilde{\theta}_{\tau+1:t}-\hat{\theta}_{\tau+1:t-1})
\\ \approx &\nabla l(z_t,\hat{\theta}_{\tau+1:t-1})+
\sum^{t-1}_{i=\tau+1}\nabla^2 l(z_i,\hat{\theta}_{\tau+1:t-1})(\tilde{\theta}_{\tau+1:t}-\hat{\theta}_{\tau+1:t-1}),
\end{align*}
where $\sum^{t-1}_{i=\tau+1} l(z_i,\hat{\theta}_{\tau+1:t-1})\approx0$ as $\hat{\theta}_{\tau+1:t-1}$ is an approximate minimizer of $\sum^{t-1}_{i=\tau+1}l(z_i,\theta)$, and we drop the term $\nabla^2 l(z_t,\hat{\theta}_{\tau+1:t-1})$. Let $\mathcal{P}_{\Theta}(\theta)$ be the projection of any $\theta\in\mathbb{R}^d$ onto $\Theta$. 
The above observation motivates us to consider the following update
\begin{align*}
\hat{\theta}_{\tau+1:t}=\mathcal{P}_{\Theta}(\hat{\theta}_{\tau+1:t-1}-H_{\tau+1:t-1}^{-1}\nabla l(z_t,\hat{\theta}_{\tau+1:t-1})),  
\end{align*}
where $H_{\tau+1:t-1}$ is a preconditioning matrix that serves as a surrogate for the second order information
$\sum^{t-1}_{i=\tau+1}\nabla^2 l(z_i,\hat{\theta}_{\tau+1:t-1})$. When the second order information is available, we suggest update the preconditioning matrix through the iteration
\begin{align*}
H_{\tau+1:t} = H_{\tau+1:t-1}+  \nabla^2 l(z_{t},\hat{\theta}_{\tau+1:t}).
\end{align*}
Alternatively, by the idea of Fisher scoring, we can also update the preconditioning matrix by
\begin{align*}
H_{\tau+1:t} =H_{\tau+1:t-1}+ \mathcal{I}_{t}(\hat{\theta}_{\tau+1:t}),  
\end{align*}
where $\mathcal{I}_{t}(\theta)=E[\nabla^2 l(z_{t},\theta)|x_t]$ with $x_t$ being a subvector of $z_t$ such as covariates in the regression setting. Finally, we approximate $\tilde{\theta}_{\tau+1:t}$ by $(t-\tau)^{-1}\sum^{t}_{j=\tau+1}\hat{\theta}_{\tau+1:j}$ and the cost value $C(\bfz_{\tau+1:t})$ by
\begin{align*}
\widehat{C}(\bfz_{\tau+1:t})=\sum^{t}_{i=\tau+1}l\left(z_i,(t-\tau)^{-1}\sum^{t}_{j=\tau+1}\hat{\theta}_{\tau+1:j}\right).
\end{align*}
Algorithm \ref{alg1} below summarizes the details of the proposed algorithm.

\begin{algorithm}
\caption{Sequential Updating Algorithm}\label{alg1}
\begin{itemize}
    \item Input the data $\{z_i\}^{T}_{i=1}$, the individual cost function $l(\cdot,\theta)$ and the penalty constant $\beta$.
    \item Set $F(0)=-\beta$, $\mathcal{C}=\emptyset$ and $R_1=\{0\}$.
    \item Iterate for $t=1,2,\dots,T$:
    \begin{enumerate}
        \item Initialize $S_{t:t}=\hat{\theta}_{t:t}$ and $H_{t:t}$. For $\tau\in R_t\setminus\{t-1\}$, perform the update
        \begin{align*}
        &\hat{\theta}_{\tau+1:t}=\mathcal{P}_{\Theta}(\hat{\theta}_{\tau+1:t-1}-H_{\tau+1:t-1}^{-1}\nabla l(z_t,\hat{\theta}_{\tau+1:t-1})),\\
        &H_{\tau+1:t}=H_{\tau+1:t-1}+ \mathcal{A}_t(\hat{\theta}_{\tau+1:t}),\\
        &S_{\tau+1:t}=S_{\tau+1:t-1}+\hat{\theta}_{\tau+1:t}.
        \end{align*}
        \item For each $\tau\in R_t$, compute
        $$\widehat{C}(\bfz_{\tau+1:t})=\sum^{t}_{i=\tau+1}l\left(z_i,(t-\tau)^{-1}S_{\tau+1:t}\right).$$
        \item Calculate
        \begin{align*}
        & F(t)=\min_{\tau\in R_{t}}\left\{F(\tau)+\widehat{C}(\bfz_{\tau+1:t})+\beta\right\},\\
        & \tau^*=\argmin_{\tau\in R_{t}}\left\{F(\tau)+\widehat{C}(\bfz_{\tau+1:t})+\beta\right\}.
        \end{align*}
        \item Let $\mathcal{C}(t)=\{\mathcal{C}(\tau^*),\tau^*\}$.
        \item Set 
        $$R_{t+1}=\left\{\tau \in R_t\cup\{t\}:F(\tau)+C(\bfz_{\tau+1:t})\leq F(t)\right\}.$$
    \end{enumerate}
       \item Output $\mathcal{C}(T)$.
\end{itemize}
\end{algorithm}

\begin{rem}
{\rm 
When the second order information is available, we suggest setting $\mathcal{A}_t(\hat{\theta}_{\tau+1:t})=\mathcal{I}_t(\hat{\theta}_{\tau+1:t})$ in Algorithm \ref{alg1}, which leads to a type of sequential quasi-Newton's method taking into account the curvature information. On the other hand, our theory in Section \ref{sec:theory} allows $\mathcal{A}_t(\hat{\theta}_{\tau+1:t})=\mu I/2$ for some constant $\mu>0$ defined in Assumption A2.
} 
\end{rem}

\begin{rem}
{\rm 
When $\mathcal{A}_t(\hat{\theta}_{\tau+1:t})$ is rank one, i.e., $\mathcal{A}_t(\hat{\theta}_{\tau+1:t})=g_{\tau+1:t}g_{\tau+1:t}^\top$, the Sherman–Morrison formula suggests a recurve equation to update $H_{\tau+1:t}^{-1}$ directly:
\begin{align*}
H_{\tau+1:t}^{-1}=(H_{\tau+1:t-1}+ g_{\tau+1:t}g_{\tau+1:t}^\top)^{-1}=H_{\tau+1:t-1}^{-1}-\frac{H_{\tau+1:t-1}^{-1}  g_{\tau+1:t}g_{\tau+1:t}^\top H_{\tau+1:t-1}^{-1}}{1+g_{\tau+1:t}^\top  H_{\tau+1:t-1}^{-1}g_{\tau+1:t}}.
\end{align*}

}
\end{rem}

\begin{rem}
{\rm 
In order to speed up the optimization and avoid poor local minima, we can add a relatively large momentum term to the gradient, which leads to the following update:
$$\mathcal{P}_{\Theta}(\hat{\theta}_{\tau+1:t-1}-H_{\tau+1:t-1}^{-1}\nabla l(z_t,\hat{\theta}_{\tau+1:t-1})+a_{\tau+1:t-1}(\hat{\theta}_{\tau+1:t-1}-\hat{\theta}_{\tau+1:t-2})),$$
where $a_{\tau+1:t-1}$ represents the
momentum. 
}
\end{rem}

\begin{rem}
{\rm 
To initialize the estimate $\hat{\theta}_{t:t}$, we suggest dividing the data into a pre-determined number of segments and estimating the parameters using the data within each segment. We then set $\hat{\theta}_{t:t}$ to be the preliminary estimate using the data in the segment to which $t$ belongs.
}
\end{rem}

\begin{rem}
{\rm 
In practice, a post-processing step is recommended to remove the change-points in $\mathcal{C}(T)$ that are too close to the boundaries and merge those change-points that are too close to each other. %We implement this strategy in our numerical studies.
}
\end{rem}

\begin{figure}[H]
\centering
  \centering
  \includegraphics[width=0.7\linewidth]{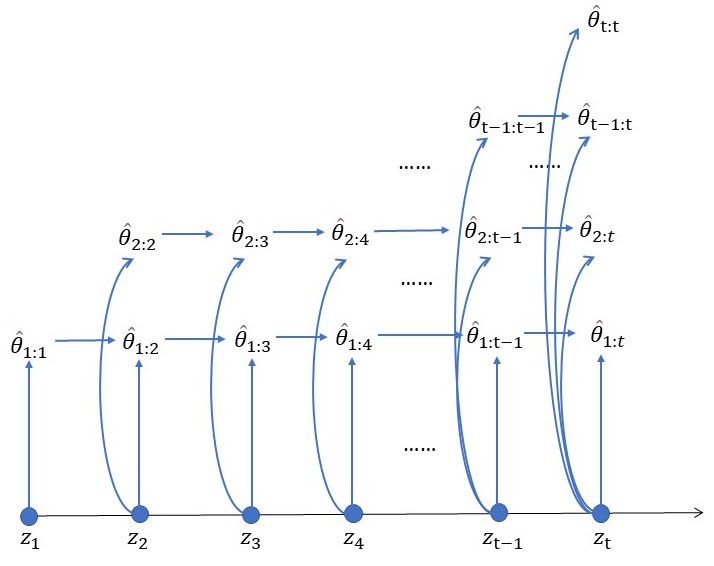}
\caption{Illustration of the updating scheme in the SE algorithms.}
\label{fig:0}
\end{figure}

\subsection{Generalized linear models}\label{sec:model}
As an illustration of our algorithm, we consider the change-point detection problem in the generalized linear models (GLM). In this case, $z_i$ contains a response $y_i$ and a set of predictors/covariates $x_i$. Suppose $y_i$ follows a
distribution in the canonical exponential family
\begin{align*}
f(y_i;\gamma_i,\phi)=\exp\left\{\frac{y_i\gamma_i-b(\gamma_i)}{w^{-1}\phi}+c(y_i,\phi)\right\},    
\end{align*}
where $\gamma_i$ is the canonical parameter, $\phi$ is the dispersion
parameter and $w$ is some known weight. The mean of $y_i$ is related to $x_i^\top\theta_i$ via $g(E[y_i])=g(\nabla b(\gamma_i))=x_i^\top\theta_i$, where $g$ is a known link function.
Suppose the observations from the time point $\tau+1$ to $t$ share the same parameter $\theta$, i.e., $\theta_i=\theta$ for $\tau+1\leq i\leq t.$
When $\phi$ is known, we let
$$l(z_i,\theta)=-\frac{y_i\gamma_i-b(\gamma_i)}{w^{-1}\phi}-c(y_i,\phi),\quad \tau+1\leq i\leq t.$$
Some algebra yields that
\begin{align*}
&\nabla l(z_i,\hat{\theta}_{\tau+1:t})=-\frac{a}{\phi v(\hat{\mu}_{i,\tau+1:t})g'(\hat{\mu}_{i,\tau+1:t})}(y_i-\hat{\mu}_{i,\tau+1:t})x_i,\\
&\mathcal{I}(x_i,\hat{\theta}_{\tau+1:t})=\frac{a}{\phi v(\hat{\mu}_{i,\tau+1:t})[g'(\hat{\mu}_{i,\tau+1:t})]^2}x_i x_i^\top,
\end{align*}
where $g(\hat{\mu}_{i,\tau+1:t})=x_i^\top \hat{\theta}_{\tau+1:t}$ and $v$ is related to the variance of $y_i$ through $\text{var}(y_i)=\phi w^{-1}v(\mu_i)$. 
In the cases of the logistic and Poisson regressions, we have $\omega=\phi=1$ and $g'(\mu)v(\mu)=1$. Hence for the logistic regression, 
\begin{align*}
&\nabla l(z_i,\hat{\theta}_{\tau+1:t})=-\left(y_i-\frac{e^{x_i^\top \hat{\theta}_{\tau+1:t}}}{1+e^{x_i^\top \hat{\theta}_{\tau+1:t}}}\right)x_i,\\
& \mathcal{I}(x_i,\hat{\theta}_{\tau+1:t})=(\hat{\mu}_{i,\tau+1:t})(1-\hat{\mu}_{i,\tau+1:t})x_ix_i^\top.
\end{align*}
While for the Poisson regression, 
\begin{align*}
&\nabla l(z_i,\hat{\theta}_{\tau+1:t})=-\left(y_i-e^{x_i^\top \hat{\theta}_{\tau+1:t}}\right)x_i,\\
& \mathcal{I}(x_i,\hat{\theta}_{\tau+1:t})=\hat{\mu}_{i,\tau+1:t}x_i x_i^\top.
\end{align*}
We shall investigate the performance of the corresponding algorithms in Section \ref{sec:sim}.

\subsection{Sequential proximal gradient descent}
In this section, we extend our algorithm to handle the case where the cost function value results from solving a penalized optimization problem. More precisely, let us consider
\begin{align}\label{eq-pen}
C(\bfz_{\tau+1:t})=\min_{\theta\in \Theta}\sum^{t}_{i=\tau+1}l(z_i,\theta)+\lambda_{\tau+1:t}\text{pen}(\theta),   
\end{align}
where the penalty term enforces a constraint on the parameter $\theta$ (e.g., the smoothness or sparsity constraint) and $\lambda_{\tau+1:t}>0$ is allowed to vary over data segments. Let
$$\text{Prox}(a;\lambda)=\argmin_{z}\frac{1}{2\lambda}\|z-a\|^2+\text{pen}(z)$$
be the proximal operator associated with the penalty term. For $\mathbf{a}=(a_1,\dots,a_d)$ with $a_i\neq 0$ and $\boldsymbol{\lambda}=(\lambda_1,\dots,\lambda_d)$ with $\lambda_i>0$, we write $\mathbf{a}^{-1}=(a^{-1}_1,\dots,a^{-1}_d)$ and $\text{Prox}(\mathbf{a};\boldsymbol{\lambda})=(\text{Prox}(a_1;\lambda_1),\dots,\text{Prox}(a_d;\lambda_d))$. We update the parameter estimate by
$\hat{\theta}_{\tau+1:t}=\mathcal{P}_\Theta(\breve{\theta}_{\tau+1:t})$ with
\begin{align*}
\breve{\theta}_{\tau+1:t}=\text{Prox}\left(\hat{\theta}_{\tau+1:t-1}-H_{\tau+1:t-1}^{-1}\nabla l(z_t,\hat{\theta}_{\tau+1:t-1});\lambda_{\tau+1:t} \|H_{\tau+1:t-1}\|_{2}^{-1}\right), 
\end{align*}
where $\|H\|_2$ denotes the spectral norm of $H$. An example here is the Lasso regression, where
$z_i=(x_i,y_i)\in\mathbb{R}\times \mathbb{R}^d$ and the objective function in (\ref{eq-pen}) can be written as 
$$\frac{1}{2}\sum^{t}_{i=\tau+1}\|y_i-x_i^\top \theta\|^2+\lambda_{\tau+1:t}\sum^{d}_{i=1}|\theta_i|$$
with $\theta=(\theta_1,\dots,\theta_d)$. In this case, $\text{Prox}(a;\lambda)=\text{sign}(a)\max(|a|-\lambda,0)$ is the soft thresholding operator.

\subsection{Choice of the penalty constant $\beta$}
Our method aims to solve the following $l_0$ penalized optimization problem approximately
\begin{align*}
\min_k\min_{\boldsymbol{\tau}}\left\{\sum^{k}_{j=0}\min_{\theta\in\Theta}\sum^{\tau_{j+1}}_{i=\tau_j+1}l(z_i,\theta) + f(k,T)\right\}
\end{align*}
 where we simultaneously optimize over the number of change-points $k$, the locations of change-points $\boldsymbol{\tau}$, and the parameters within each segment. With $k$ change-points that divide the data sequence into $k+1$ segments, the total number of parameters is $(k+1)d+k$, where $(k+1)d$ counts the number of parameters from the $k+1$ segments and $k$ corresponds to the number of change-points. We recommend setting $f(k,T)=\{(k+1)d+k\}\log(T)/2$ or equivalently 
$$\beta_T=(d+1)\log(T)/2,$$ 
which leads to the BIC criterion.

\section{Convergence Analysis}\label{sec:theory}
To understand why our method works, it is crucial to investigate how well the sequential gradient method can approximate the cost value for each data segment. To be clear, let us focus on the segment $\bfz_{1:n}$ with $1\leq n\leq T$.
Let 
$\hat{\theta}^*=\argmin_{\theta\in\Theta}F_n(\theta),$
where $F_n(\theta)=n^{-1}\sum^{n}_{i=1}l(z_i,\theta)$. Recall that given $\hat{\theta}_{1}$ (which only depends on $z_1$), we have the following updating scheme for finding an approximation to $\hat{\theta}^*$
\begin{align*}
\hat{\theta}_{1:t}=\mathcal{P}_{\Theta}(\hat{\theta}_{1:t-1}-H_{1:t-1}^{-1}\nabla l(z_t,\hat{\theta}_{t-1})), \quad 2\leq t\leq n,  
\end{align*}
where $H_{1:t-1}$ is a preconditioning matrix that only depends on $z_1,\dots,z_{t-1}$. 
Throughout this section, we write $\hat{\theta}_{1:t}=\hat{\theta}_t$, $H_{1:t}=H_t$ and $l(z_t,\theta)=l_t(\theta)$ for the ease of notation. Our analysis here focuses on the SeGD.

\begin{definition}[Strong convexity]
{\rm 
A differentiable function $F$ is said to be $\mu$-strongly convex, with $\mu>0$, if
and only if
$$F(\eta)\geq F(\theta)+\nabla F(\theta)^\top (\eta-\theta)+\frac{\mu}{2}\|\eta-\theta\|^2.$$ 
}
\end{definition}

\begin{definition}[Smoothness]
{\rm 
A differential function $F$ is said to be $L$-smooth if
$$|F(\eta)-F(\theta)-\nabla F(\theta)^\top (\eta-\theta)|\leq \frac{L}{2}\|\eta-\theta\|^2,$$
for any $\eta,\theta$ in the domain of $F$.
}
\end{definition}

%\begin{lemma}[Lojasiewicz inequality]\label{Loj}
%{\rm 
%Suppose $F$ is $\mu$-strongly convex with the minimizer %$\theta^*$. Then we have
%$$\|\nabla F(\theta)\|^2 \geq %2\mu(F(\theta)-F(\theta^*)).$$}
%\end{lemma}

We aim to quantify the difference $F_n(n^{-1}\sum^{n}_{t=1}\hat{\theta}_{t})-F_n(\hat{\theta}^*)$ and derive the convergence rate. To this end, we make
the following assumptions. 
\begin{enumerate}
    \item[A1.] There is an unknown change-point $1\leq \xi<T$.
    The first $\xi$ observations are assigned with their time locations through a random permutation $(\sigma(1),\dots,\sigma(\xi))$ while the last $n-\xi$ observations are assigned with the time locations through a random permutation $(\sigma(\xi+1),\dots,\sigma(T))$;  
    \item[A2.] $F_n$ is $\mu$-strongly convex;
    %\item[A3.] $F_n$ is $L$-smooth;
    %\item[A3.] $H_{t-1}$ only depends on $\{(x_i,y_i)\}^{t-1}_{i=1}$;
    %\item $\tau\leq \lambda_{\min}(H_t)\leq \lambda_{\max}(H_t)\leq \gamma$ with $2\tau^2/\gamma>L$ and $0<1-2\mu/\gamma+\mu/(L\tau^2)<1$;
    \item[A3.] $H_t=\eta_t I$ with $\eta_t=t\mu/2$;
    \item[A4.] $\sup_{1\leq i\leq n,\theta\in\Theta}\|\nabla l_i(\theta)\|\leq C$ for some constant $C>0$;
    \item[A5.] $\Theta$ is a compact set; %with the diameter $D=\sup_{\theta_1,\theta_2\in\Theta}\|\theta_1-\theta_2\|$;
    \item[A6.] $l_i(\theta)=f(x_i^\top\theta,y_i)+r(\theta)$, where $f(a,y)$ is $L_1$-Lipschitz and $L_2$-smooth in $a$ for any given $y$, $\|x_i\|$ is bounded almost surely, and $r$ is some fixed function. 
    %with $\tau^2_t/\gamma_t>L\rho$ and $\gamma_t>\mu$ for any $t\geq 1$;
   %\item $E[\|\nabla L(f(x_t,\theta),y_t)\|^2]\leq \rho \|\nabla F(\theta)\|^2+\sigma^2$ for some $\rho>0.$
\end{enumerate}
%Assumption 4 implies that $\gamma^{-1}_t\leq \lambda_{\min}(H_t^{-1})\leq \lambda_{\max}(H_t^{-1})\leq \tau^{-1}_t.$

\begin{theorem}\label{thm1}
{\rm 
Let $E^*$ be the expectation with respect to the random permutation $\sigma$ conditional on the observed data values $\{z_i\}^{n}_{i=1}$. Under Assumptions A1-A6, we have
\begin{align*}
E^*\left[F_n\left(n^{-1}\sum^{n}_{t=1}\hat{\theta}_{t}\right)-F_n(\hat{\theta}^*)\right]\leq \frac{c\log(n)}{n},
\end{align*}
for some constant $c$ that depends on $\mu, C,L_1$ and $L_2$.
}
\end{theorem}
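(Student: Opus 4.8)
The plan is to read Theorem~\ref{thm1} as a without-replacement stochastic gradient guarantee and to run the classical strongly-convex SGD argument, with the random permutation of Assumption A1 supplying, conditional on the observed data values, the processing order that replaces i.i.d.\ sampling as the sole source of randomness in $E^*$. Throughout set $v_{t-1}=\hat\theta_{t-1}-\hat\theta^*$, $g_t=\nabla l_t(\hat\theta_{t-1})$, let $\mathcal{F}_{t-1}$ collect the information from the first $t-1$ steps, and write $\gamma_{t-1}=H_{t-1}^{-1}=2/\{(t-1)\mu\}$ using A3. First I would use that $\mathcal{P}_{\Theta}$ is non-expansive and $\hat\theta^*\in\Theta$ (A5) to obtain the one-step inequality
$$\|\hat\theta_t-\hat\theta^*\|^2\le \|v_{t-1}\|^2-2\gamma_{t-1}\langle g_t,v_{t-1}\rangle+\gamma_{t-1}^2\|g_t\|^2,$$
and bound the last term by $\gamma_{t-1}^2C^2$ via A4.

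Next I would split $g_t=\nabla F_n(\hat\theta_{t-1})+\{g_t-\nabla F_n(\hat\theta_{t-1})\}$ and introduce the conditional bias $b_{t-1}=E^*[g_t\mid\mathcal{F}_{t-1}]-\nabla F_n(\hat\theta_{t-1})$, which is the price of sampling without replacement (for i.i.d.\ sampling it would vanish). Taking $E^*[\cdot\mid\mathcal{F}_{t-1}]$ kills the mean-zero fluctuation of $g_t$ and leaves only $b_{t-1}$; applying $\mu$-strong convexity (A2) as $\langle\nabla F_n(\hat\theta_{t-1}),v_{t-1}\rangle\ge F_n(\hat\theta_{t-1})-F_n(\hat\theta^*)+\tfrac{\mu}{2}\|v_{t-1}\|^2$, and then Young's inequality $-\langle b_{t-1},v_{t-1}\rangle\le \tfrac1\mu\|b_{t-1}\|^2+\tfrac{\mu}{4}\|v_{t-1}\|^2$, I would absorb the $\|v_{t-1}\|^2$ terms so that, with the schedule $\gamma_{t-1}\propto 1/(t-1)$ from A3, the distance terms telescope. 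Writing $D_t=\tfrac{(t-1)\mu}{4}\|\hat\theta_t-\hat\theta^*\|^2$ this takes the clean form
$$F_n(\hat\theta_{t-1})-F_n(\hat\theta^*)\le D_{t-1}-E^*[D_t\mid\mathcal{F}_{t-1}]+\tfrac1\mu\|b_{t-1}\|^2+\tfrac{C^2}{(t-1)\mu}.$$

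I would then take full expectation and sum over $t$. The $D$-terms telescope to $-E^*[D_n]\le 0$ since $D_1=0$; the noise budget $\sum_t C^2/\{(t-1)\mu\}$ contributes the $C^2\log(n)/\mu$ that is the source of the logarithm; and Jensen's inequality applied to the convex $F_n$ turns the average of the per-step gaps into a bound on $F_n(n^{-1}\sum_t\hat\theta_t)-F_n(\hat\theta^*)$, so that dividing by $n$ produces the $\log(n)/n$ rate (the harmless discrepancy between $\{\hat\theta_t\}_{t\le n-1}$ and the full average being absorbed using A4 and A5).

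The step I expect to be the genuine obstacle is bounding $E^*\|b_{t-1}\|^2$, the expected squared gap between $\nabla F_n(\theta)$ and the average gradient over the not-yet-processed indices, evaluated at the data-dependent point $\hat\theta_{t-1}$. Because $\hat\theta_{t-1}$ is correlated with the set of remaining indices, a pointwise finite-population variance estimate is insufficient, and I would instead pass to the uniform bound $\|b_{t-1}\|\le\sup_{\theta\in\Theta}\|\bar g_{t-1}(\theta)-\nabla F_n(\theta)\|$, where $\bar g_{t-1}$ is the remaining-index average gradient, and control its expectation by a transductive Rademacher complexity of the gradient class $\{z\mapsto\nabla l(z,\theta):\theta\in\Theta\}$. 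Assumption A6 is exactly what makes this class benign: the linear index $x_i^\top\theta$, the $L_1$-Lipschitz and $L_2$-smooth dependence of $f$ on that index, and the almost-sure boundedness of $\|x_i\|$ collapse the problem to a bounded, finite-dimensional Lipschitz family, whose transductive complexity over a subsample of size $n-t+1$ is of order $(n-t+1)^{-1/2}$. Squaring gives $E^*\|b_{t-1}\|^2=O(1/(n-t+1))$, and since $\sum_{t}1/(n-t+1)=O(\log n)$ the bias enters at the same $\log(n)/n$ order; combining the finite-population fluctuation with the complexity bound via bounded differences, and tracking the telescoping constants produced by A3, is then routine.
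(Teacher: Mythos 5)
Your proposal follows the same route as the paper's proof: non-expansiveness of $\mathcal{P}_\Theta$ plus A4 and A5 for the one-step inequality, strong convexity (A2) to extract the function-value gap, the schedule $\eta_t=t\mu/2$ from A3 so the distance terms telescope, a transductive-complexity bound for the without-replacement bias (this is exactly the paper's Lemma \ref{lem-1}, taken from Corollary 3 of \cite{shamir2016without}), harmonic sums producing the logarithm, and Jensen's inequality at the end; your Young-inequality bookkeeping with $D_t$ is an equivalent rearrangement of the paper's inequality (\ref{eq-1}). The genuine gap is in the one step you yourself flag as the obstacle. You claim $E^*\|b_{t-1}\|^2=O(1/(n-t+1))$ by treating $\sigma$ as a single uniform permutation of all $n$ indices, so that conditional on the past the next index is uniform over \emph{all} unprocessed points and the bias is a pure finite-population sampling fluctuation. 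But Assumption A1 says the permutation acts \emph{within two blocks separately}: $\{1,\dots,\xi\}$ fills the first $\xi$ time slots and $\{\xi+1,\dots,n\}$ fills the rest. For $t>\xi$ your description is correct (all remaining indices lie in the second block), but for $t\le\xi$ the conditional law of $\sigma_t$ is uniform over the remaining \emph{first-block} indices only, and in the paper's notation the bias becomes
\begin{align*}
b_{t-1}=\frac{t-1}{n}\left\{\nabla l_{t:\xi}(\hat\theta_{t-1})-\nabla l_{1:t-1}(\hat\theta_{t-1})\right\}
+\frac{n-\xi}{n}\left\{\nabla l_{t:\xi}(\hat\theta_{t-1})-\nabla l_{\xi+1:n}(\hat\theta_{t-1})\right\}.
\end{align*}
The first piece is a genuine random-split fluctuation within the first block and is controlled by transductive complexity exactly as you propose. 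The second piece is not: the permutation never exchanges points across $\xi$, and since the two blocks follow different models (A1 posits a real change point), the gap between their average gradients is generically of constant order, not $O((n-t+1)^{-1/2})$. So the bound you rest the whole argument on fails precisely in the regime the theorem is about, and no appeal to exchangeability or bounded differences can rescue it, because the offending term is deterministic given the data.

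What the paper does instead is split the analysis into the two cases $t>\xi$ and $t\le\xi$; in the latter it decomposes the bias with weights $w=(t-1)/(n-\xi+t-1)$ and $1-w$ into the within-block piece and the cross-block piece, keeps the Cauchy--Schwarz step in the directional form $\sup_{\theta\in\Theta}\{\cdot\}^\top(\theta-\hat\theta^*)/\|\theta-\hat\theta^*\|$ (rather than a norm supremum), and assigns the two pieces the per-step budgets $\frac{(t-1)^2}{\xi-t+1}+(t-1)$ and $\frac{(n-\xi)^2}{\xi-t+1}+(n-\xi)$, which still sum to $O(n^2\log n)$ and hence give $c\log(n)/n$ after normalization. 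To repair your write-up you would need to (i) state the conditional law of $\sigma_t$ correctly under A1 in both regimes, and (ii) give the cross-block term $\frac{n-\xi}{n}\{\nabla l_{t:\xi}-\nabla l_{\xi+1:n}\}$ its own treatment along these lines rather than folding it into the sampling fluctuation; as written, your proof is only valid in the no-change-point (fully exchangeable) case, which A1 explicitly excludes.
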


\begin{rem}
{\rm 
In Assumption A.1, we assume that there is a single change point. Similar arguments can be used to handle the cases of no change point and multiple change-points.
}
\end{rem}

\begin{rem}
{\rm 
Assumptions A2, A4 and A6 are fulfilled for logistic models when $\|x_i\|$ is bounded almost surely and the smallest eigenvalue of $n^{-1}\sum^{n}_{t=1}x_t x_t^\top$ is bounded away from zero almost surely. We also remark that the same conclusion can be justified when Assumptions A2, A4, and A6 hold with probability converging to one by using the conditioning argument.
}
\end{rem}

\begin{rem}
{\rm 
Under Assumption A1, we can cast our method into a type of SGD without replacement sampling and employ the related techniques \citep{shamir2016without} to prove Theorem \ref{thm1}.
}
\end{rem}

\section{Numerical Studies}\label{sec:sim}
In this section, we apply the PELT method and the proposed SE method to several simulated data sets and a real data set to compare their estimation accuracy measured by the rand index and the computational time (in seconds).

%\subsection{Linear regression}
%We present a small numerical example to demonstrate the efficiency of the algorithm.
%We generate $n=2,000$ samples from the linear model (\ref{eq:lm}) with $p=1$ and four change-points located at $\tau_k=\lfloor kn/5 \rfloor$ for $i=1,2,3,4.$ Here $X_t\sim N(0,1)$ and $\epsilon_t\sim N(0,\sigma^2)$ with $\sigma=0.75$ independently for $1\leq t\leq n$. The regression coefficients in each segment are generated independently from the uniform distribution on $(-3,3)$. We compare the proposed algorithm with the vanilla PELT algorithm which refits the linear regression for each segment. In both algorithms, we set $\beta_n=2\log(n)$ in the penalty function. As seen from Figure \ref{fig-1}, the proposed algorithm produces the same set of change-point locations as delivered by the vanilla PELT algorithm but with a significantly lower run time. In particular, our algorithm completed the analysis within a second, which is 5 times faster than the vanilla PELT.

%\begin{figure}
%    \centering
%    \includegraphics[scale=0.9]{}
%    \caption{True values of $\theta_t$ and the estimated change-point locations.}
%    \label{fig-1}
%\end{figure}

\subsection{Generalized linear models}
We first consider the GLM with piecewise constant regression coefficients. The details for implementing SE under GLM has been described in Section \ref{sec:model}. 

%We use a simple numerical example to showcase the efficiency of the algorithm for logistic regression. Specifically, we generate $n=2,000$ samples from the following model:
%\begin{align*}
%y_i \sim \text{Bernoulli}\left(\frac{1}{1+e^{-x_i\theta_i}}\right),\quad x_i\sim N(1,1).
%\end{align*}
%Assume that there are four change-points located at $\tau_k=\lfloor kn/5 \rfloor$ for $i=1,2,3,4.$ The regression coefficients in each segment are generated independently from the uniform distribution on $(-3,3)$.
%We compare the proposed algorithm with the vanilla PELT algorithm which refits the logistic regression for each segment. 
%We use the \texttt{fastglm} function written in C++ to calculate the MLE in GLM. 
%In both algorithms, we set $\beta_n=\log(n)$ in the penalty function. We see that the two algorithms deliver very similar results with the proposed algorithm being 400 times faster.

%\begin{figure}
%    \centering
%    \includegraphics[scale=0.9]{Logistic-2000.pd%f}
%    \caption{The success probabilities $1/(1+e^{-x_i\theta_i})$ and the estimated change-point locations.}
%    \label{fig-2}
%\end{figure}

\subsubsection{Logistic regression}\label{sec:logistic}
We first consider the logistic regression model:
\begin{align*}
y_i \sim \text{Bernoulli}\left(\frac{1}{1+e^{-x_i^T\theta_i}}\right),\quad x_i\sim N_d(0, \Sigma) \hspace{5pt} \text{with} \hspace{5pt} \Sigma = (0.9^{|i - j|})_{d \times d},\quad 1\leq i\leq T.
\end{align*}
Throughout the simulations, we set $T=1500$, $d\in\{1,3,5\}$ and vary the value of $\theta_i$ leading to different magnitudes of change. Let $\delta_d\in\mathbb{R}^d$ be the difference between the coefficients before and after a change-point. We choose $\delta_d$ such that $M(\delta_d):=\delta_d^\top \Sigma \delta_d\in\{0.36,0.81,1.96\}$ corresponding to small, medium and large magnitudes of change respectively.
We remark that the results are not sensitive to the specific choice of $\delta_d$ as long as $M(\delta_d)$ is held at the same level. For each configuration, we shall consider the number of change-points equal to $0,1,3$ and $5$. The detailed simulation settings for each case are given in Section \ref{sec:sim-set}. As seen from Figure \ref{fig:1}, SE achieves the same estimation accuracy in terms of the rand index as PELT does. SE could be around 350 times faster than PELT, making SE a highly scalable method in practice. For example, when the magnitude of change is small with three change-points for $d=5$, Se finished the analysis within $8.77$ seconds while it took $3133.58$ seconds for PELT to get the same result.

%Magnitude of change is regulated by the quantity $M>0$. For any $\delta_d \in \mathbb{R}^d$, we define $\delta^*_d = \frac{\delta_d\,\sqrt{M}}{\sqrt{\delta_d\,\Sigma\,\delta_d}}$ such that ${\delta^*_d}^T\,\Sigma\,\delta^*_d$ is constant at $M$ for all $d$. We have considered $M$ equals $0.36, 0.81$ and $1.96$ to regulate small, medium and large magnitude of change respectively. Since the $\delta^*_d$ vector used is not sensitive to the choice of $\delta_d$, a systematic pattern for $\delta_d$ is considered. $\delta_d$ is set to $5$ for $d = 1$, $(5, 7, 9)^T$ for $d = 3$ and $(5, 7, 9, 11)^T$ for $d = 5$.  

\subsubsection{Poisson regression}
Next, we consider the Poisson regression model given by
\begin{align*}
y_i \sim \text{Poisson}\left(e^{x_i^\top \theta_i}\right),\quad x_i\sim N_{d}(0, \Sigma) \hspace{5pt} \text{with} \hspace{5pt} \Sigma = (0.9^{|i - j|}))_{d \times d},\quad 1\leq i\leq T.
\end{align*}
The other simulation settings are the same as those for the logistic regression in Section \ref{sec:logistic} with the only exception of the $M(\delta_d)$ values. Here we set $M(\delta_d) \in \{0.01,0.05,0.2\}$, leading to small, medium, and large magnitudes of change, respectively. The way of generating the true regressions coefficients for each interval of observations partitioned by the true change-point locations is also the same as the logistic regression case; see Section \ref{sec:sim-set}. Figure \ref{fig:2} shows that SE performs as well as PELT in most cases at a much lower computational cost. For example, when there is only one change-point having a small magnitude of change in $d=3$, SE and PELT delivered the same rand index values with the computational time equal to 10.12 seconds and 5850 seconds, respectively, indicating that SE is around 578 times faster.

\subsection{Penalized linear regression}
We consider the linear model
\begin{align*}
y_i = x_i^\top\theta_i + \epsilon_i, \quad x_i\sim N_{d}(0, \Sigma) \hspace{5pt} \text{with} \hspace{5pt} \Sigma = 0.5I_{d \times d} \hspace{5pt} \text{and} \hspace{5pt} \epsilon_i\sim N(0,0.5),\quad 1\leq i\leq T.
\end{align*}
Set $T=1500, d=50$ and $s\in\{1,3,5\}$ where $s$ is the number of non-zero components of the $d$ dimensional regression coefficients $\theta_i$. The locations of the nonzero components are randomly selected. The magnitude of change is reflected by the difference between the $\theta_i$ values before and after the change-point(s). The values of the non-zero components of the regression coefficients $\theta_i$ within each odd-numbered segment partitioned by the change-points $\{\tau_i\}$ (i.e. $1\leq i \leq \tau_1$ and $\tau_j < i \leq \tau_{j+1}$ when $j$ is even) are set to be $1$. For the even-numbered segment (i.e. $\tau_j < i \leq \tau_{j+1}$ when $j$ is odd), the non-zero coefficients are generated from $N(1,\delta)$ with $\delta\in \{0.1,0.4,1\}$ corresponding to small, medium and large magnitudes of changes, respectively. Like the GLM simulation settings, we shall consider $0, 1, 3$, and $5$ change-points for different combinations of $s$ and magnitude of change. We consider the cost function 
\begin{align}\label{eq-pen2}
C(\bfz_{\tau+1:t})=\min_{\theta\in \Theta}\frac{1}{2}\sum^{t}_{i=\tau+1}\|y_i-x_i^\top\theta\|^2+\lambda_{\tau+1:t}\sum^{d}_{i=1}|\theta_i|,   
\end{align}
where $\lambda_{\tau+1:t}=\hat{\sigma}\sqrt{2\log(d)/(t-\tau)}$ with $\hat{\sigma}$ being a preliminary estimate of the noise level. In particular, we divide the data into ten segments, estimate the noise level within each segment using Lasso, and set $\hat{\sigma}$ to be the average of these estimates. We implement both PELT and SeGD (without the second order information) in this case.
As seen from Figures \ref{fig:3} and \ref{fig:5}, SeGD achieves competitive accuracy compared to PELT in most cases with lower computational cost. For instance, when $s=1$ and there is only one medium change-point, SeGD is about 8 times faster than PELT.

\subsection{A real data example}
We illustrate the method using a dataset from the immune correlates study of Maternal To Child Transmission (MTCT) of HIV-1 \citep{fong2015change}. The data set contains three variables: the 0/1 response $y_i$ indicating whether HIV transits from mother to child (79 HIV-transmitting mothers and 157 non-transmitting mothers, leading to $T=236$), childbirth delivery type $x_i$ (C-section/Vaginal), and the NAb score $z_i$ measuring the amount and breadth of neutralizing antibodies. 
We consider the following change point/threshold model: $y_i\sim \text{Bernoulli}(p_i)$ with
\begin{align*}
\log\left(\frac{p_i}{1-p_i}\right)=\tilde{x}_i^\top\beta(z_i),\quad \tilde{x}_i=(1,x_i)^\top, 
\end{align*}
where $\beta(z)=\sum^{k}_{j=0}\beta_j\mathbf{1}\{a_j\leq z<a_{j+1}\}$ with $\min_i z_i=a_0<a_1<a_2<\cdots<a_{k+1}=+\infty.$ In words, the regression coefficient is a piece-wise constant function of the NAb score.

To implement PELT and SE, we first sort the data in descending order according to the NAb score. Both methods find a single change that corresponds to the NAb score at 7.548556. SE finishes the analysis in 0.62 seconds, while PELT takes 22 seconds to get the same result. %{\color{red} I tried the above data set which illustrates the point but I am hoping to find a more interesting data set with a larger sample size.}

\section{Concluding Remarks}\label{sec:con}
To conclude, we point out three possible strategies
namely accelerated SeGD, multiple epochs and backward updating scheme to improve estimation accurate by speeding up the convergence in SeGD.
In Theorem \ref{thm1}, we have shown that the difference between $F_n\left(n^{-1}\sum^{n}_{t=1}\hat{\theta}_{t}\right)$ and the target cost value is of the order $O(\log(n)/n)$ with $n$ being the length of the segment. An interesting future direction is to develop an accelerated sequential gradient method to improve the convergence rate. Motivated by the accelerated SGD, we may consider the following update strategy:
\begin{align*}
&\beta_{\tau+1:t-1}=\alpha\hat{\theta}_{\tau+1:t-1}+(1-\alpha)v_{\tau+1:t-1},\\
&\hat{\theta}_{\tau+1:t}=\beta_{\tau+1:t-1}-H_{\tau+1:t-1}^{-1}\nabla L(f(x_t,\beta_{\tau+1:t-1}),y_t), \\
&\gamma_{\tau+1:t-1}=\beta_{\tau+1:t-1} + (1-\beta)v_{\tau+1:t-1}, \\
&v_{\tau+1:t}=\gamma_{\tau+1:t-1}-\zeta H_{\tau+1:t-1}^{-1}\nabla L(f(x_t,\beta_{\tau+1:t-1}),y_t),
\end{align*}
where we set $v_{\tau+1:\tau+1}=\hat{\theta}_{\tau+1:\tau+1}$ and $\alpha,\beta,\zeta$ are tuning parameters. An in-depth analysis of this algorithm is left for future research. Another way to improve the convergence is by using multiple epochs/passes over the data points in each segment. Algorithm \ref{alg1} only uses each data point once (one-pass) in updating the parameter estimates for a particular segment. Using multiple epochs has been shown to improve the rate of convergence \citep{nagaraj2019sgd}. In Section \ref{sec:ext}, we describe such an extension of our algorithm. Finally, one can introduce a backward updating scheme. Together with the forward scheme, we can update $\hat{\theta}_{a:t}$ using the estimates based on nearby segments $\hat{\theta}_{a+1:t}$ and $\hat{\theta}_{a:t-1}$, see Figure \ref{fig:segd2} for an illustration. %Finally, our approach is potentially valuable in detecting the structural stability of complex models such as neural networks.

\begin{figure}[H]
\centering
  \centering
  \includegraphics[width=0.7\linewidth]{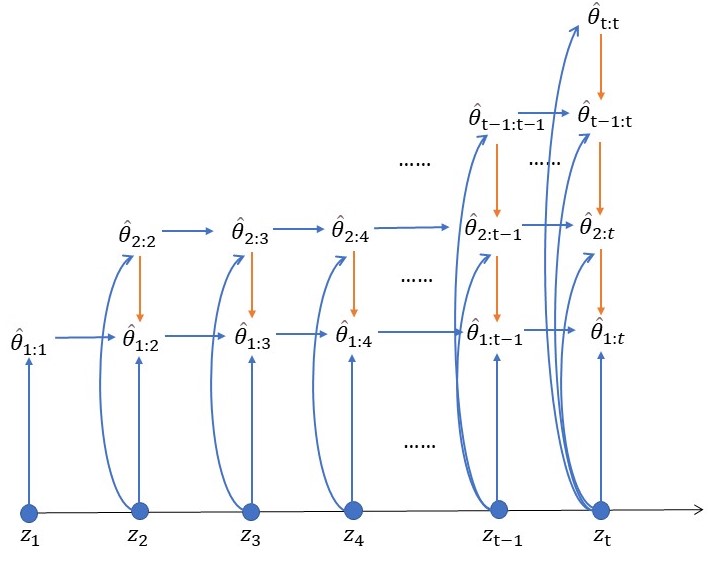}
\caption{SE with both forward (blue arrows) and backward (orange arrows) updating schemes.}
\label{fig:segd2}
\end{figure}

\bibliography{reference}

\begin{thebibliography}{}

\bibitem[Aminikhanghahi and Cook, 2017]{aminikhanghahi2017survey}
Aminikhanghahi, S. and Cook, D.~J. (2017).
\newblock A survey of methods for time series change point detection.
\newblock {\em Knowledge and information systems}, 51(2):339--367.

\bibitem[Aue and Horv{\'a}th, 2013]{aue2013structural}
Aue, A. and Horv{\'a}th, L. (2013).
\newblock Structural breaks in time series.
\newblock {\em Journal of Time Series Analysis}, 34(1):1--16.

\bibitem[Auger and Lawrence, 1989]{auger1989algorithms}
Auger, I.~E. and Lawrence, C.~E. (1989).
\newblock Algorithms for the optimal identification of segment neighborhoods.
\newblock {\em Bulletin of mathematical biology}, 51(1):39--54.

\bibitem[Brodsky and Darkhovsky, 1993]{brodsky1993nonparametric}
Brodsky, E. and Darkhovsky, B.~S. (1993).
\newblock {\em Nonparametric methods in change point problems}, volume 243.
\newblock Springer Science \& Business Media.

\bibitem[Cs{\"o}rg{\"o} et~al., 1997]{csorgo1997limit}
Cs{\"o}rg{\"o}, M., Cs{\"o}rg{\"o}, M., and Horv{\'a}th, L. (1997).
\newblock Limit theorems in change-point analysis.

\bibitem[Fong et~al., 2015]{fong2015change}
Fong, Y., Di, C., and Permar, S. (2015).
\newblock Change point testing in logistic regression models with interaction
  term.
\newblock {\em Statistics in medicine}, 34(9):1483--1494.

\bibitem[Fryzlewicz, 2014]{fryzlewicz2014wild}
Fryzlewicz, P. (2014).
\newblock Wild binary segmentation for multiple change-point detection.
\newblock {\em The Annals of Statistics}, 42(6):2243--2281.

\bibitem[Harchaoui and L{\'e}vy-Leduc, 2010]{harchaoui2010multiple}
Harchaoui, Z. and L{\'e}vy-Leduc, C. (2010).
\newblock Multiple change-point estimation with a total variation penalty.
\newblock {\em Journal of the American Statistical Association},
  105(492):1480--1493.

\bibitem[Jackson et~al., 2005]{jackson2005algorithm}
Jackson, B., Scargle, J.~D., Barnes, D., Arabhi, S., Alt, A., Gioumousis, P.,
  Gwin, E., Sangtrakulcharoen, P., Tan, L., and Tsai, T.~T. (2005).
\newblock An algorithm for optimal partitioning of data on an interval.
\newblock {\em IEEE Signal Processing Letters}, 12(2):105--108.

\bibitem[Keogh et~al., 2001]{keogh2001online}
Keogh, E., Chu, S., Hart, D., and Pazzani, M. (2001).
\newblock An online algorithm for segmenting time series.
\newblock In {\em Proceedings 2001 IEEE international conference on data
  mining}, pages 289--296. IEEE.

\bibitem[Killick et~al., 2012]{killick2012optimal}
Killick, R., Fearnhead, P., and Eckley, I.~A. (2012).
\newblock Optimal detection of changepoints with a linear computational cost.
\newblock {\em Journal of the American Statistical Association},
  107(500):1590--1598.

\bibitem[Liu et~al., 2021]{liu2021high}
Liu, B., Zhang, X., and Liu, Y. (2021).
\newblock High dimensional change point inference: Recent developments and
  extensions.
\newblock {\em Journal of Multivariate Analysis}, page 104833.

\bibitem[Nagaraj et~al., 2019]{nagaraj2019sgd}
Nagaraj, D., Jain, P., and Netrapalli, P. (2019).
\newblock Sgd without replacement: Sharper rates for general smooth convex
  functions.
\newblock In {\em International Conference on Machine Learning}, pages
  4703--4711. PMLR.

\bibitem[Niu et~al., 2016]{niu2016multiple}
Niu, Y.~S., Hao, N., and Zhang, H. (2016).
\newblock Multiple change-point detection: a selective overview.
\newblock {\em Statistical Science}, pages 611--623.

\bibitem[Page, 1955]{page1955test}
Page, E. (1955).
\newblock A test for a change in a parameter occurring at an unknown point.
\newblock {\em Biometrika}, 42(3/4):523--527.

\bibitem[Page, 1954]{page1954continuous}
Page, E.~S. (1954).
\newblock Continuous inspection schemes.
\newblock {\em Biometrika}, 41(1/2):100--115.

\bibitem[Rajput et~al., 2020]{rajput2020closing}
Rajput, S., Gupta, A., and Papailiopoulos, D. (2020).
\newblock Closing the convergence gap of sgd without replacement.
\newblock In {\em International Conference on Machine Learning}, pages
  7964--7973. PMLR.

\bibitem[Rigaill, 2010]{rigaill2010pruned}
Rigaill, G. (2010).
\newblock Pruned dynamic programming for optimal multiple change-point
  detection.
\newblock {\em arXiv preprint arXiv:1004.0887}, 17.

\bibitem[Shamir, 2016]{shamir2016without}
Shamir, O. (2016).
\newblock Without-replacement sampling for stochastic gradient methods.
\newblock {\em Advances in neural information processing systems}, 29.

\bibitem[Tartakovsky et~al., 2014]{tartakovsky2014sequential}
Tartakovsky, A., Nikiforov, I., and Basseville, M. (2014).
\newblock {\em Sequential analysis: Hypothesis testing and changepoint
  detection}.
\newblock CRC Press.

\bibitem[Truong et~al., 2020]{truong2020selective}
Truong, C., Oudre, L., and Vayatis, N. (2020).
\newblock Selective review of offline change point detection methods.
\newblock {\em Signal Processing}, 167:107299.

\bibitem[Vostrikova, 1981]{vost1981}
Vostrikova, L.~Y. (1981).
\newblock Detecting “disorder” in multidimensional random processes.
\newblock {\em Dokl. Akad. Nauk}, 259(2):270--274.

\end{thebibliography}

\begin{figure}[H]
\centering
\begin{subfigure}{0.9\textwidth}
  \centering
  \includegraphics[width=1\linewidth]{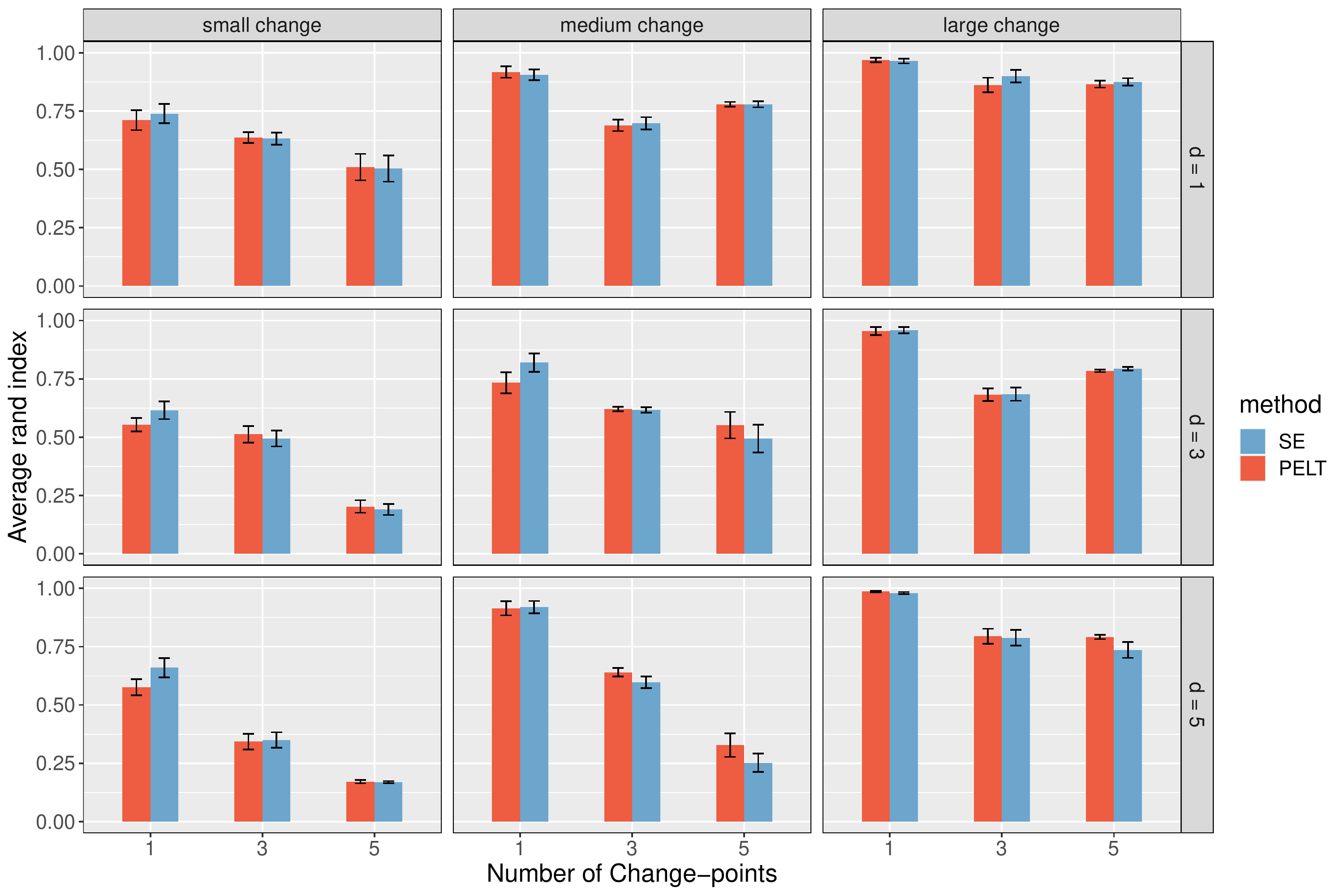}
  \caption{Average rand index}
  \label{fig:1a}
\end{subfigure}
\begin{subfigure}{0.9\textwidth}
  \centering
  \includegraphics[width=1\linewidth]{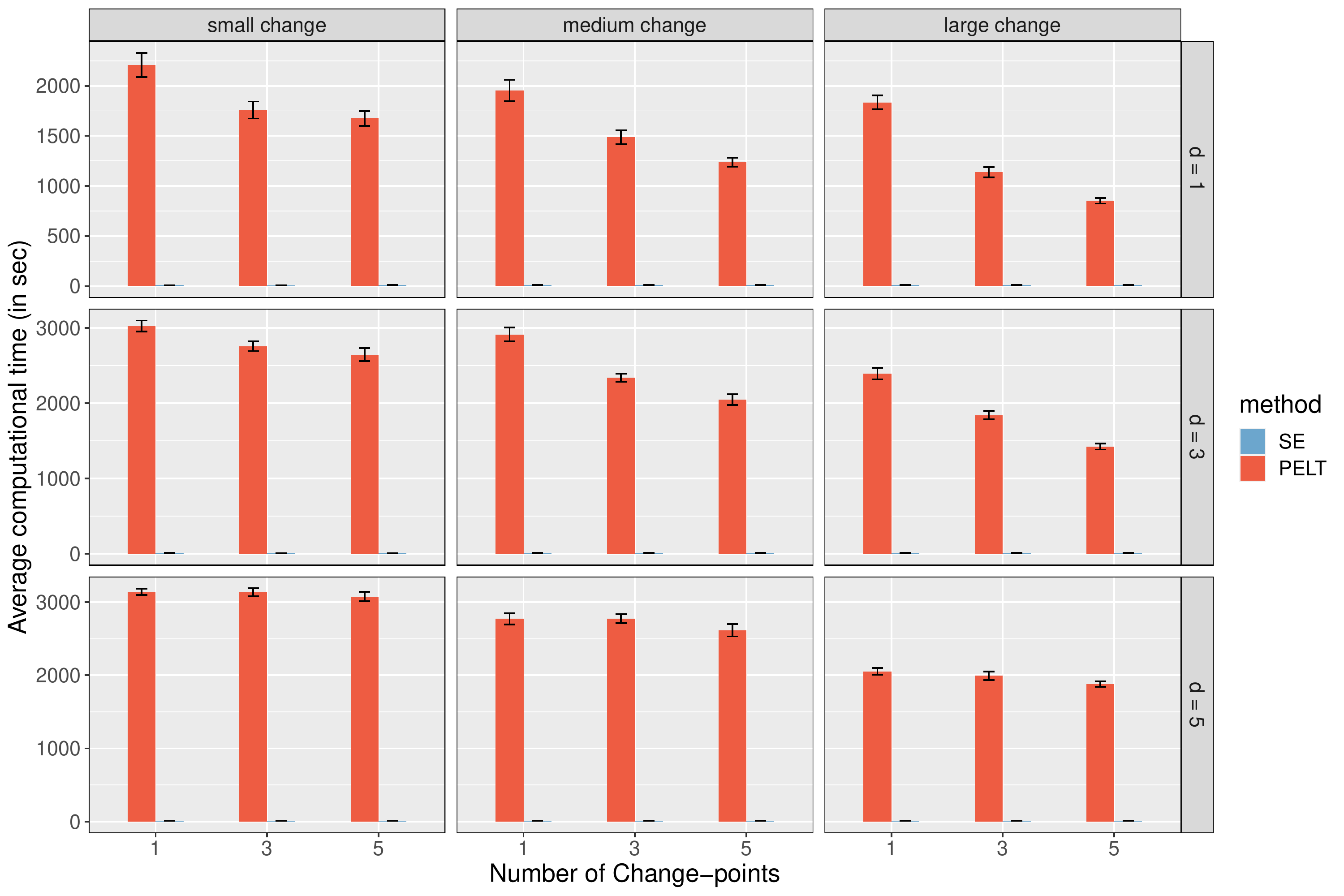}
  \caption{Computational time}
  \label{fig:1b}
\end{subfigure}
\caption{Average rand index and computational time for SE and PELT under logistic regression models with different number of change-points (1, 3, 5), magnitude of changes (small, medium and large) and dimension $d$ (1, 3, 5). Error bars represent the 95\% CIs ($\pm 2\times \text{standard error}$).}
\label{fig:1}
\end{figure}

\begin{figure}[H]
\centering
\begin{subfigure}{0.9\textwidth}
  \centering
  \includegraphics[width=1\linewidth]{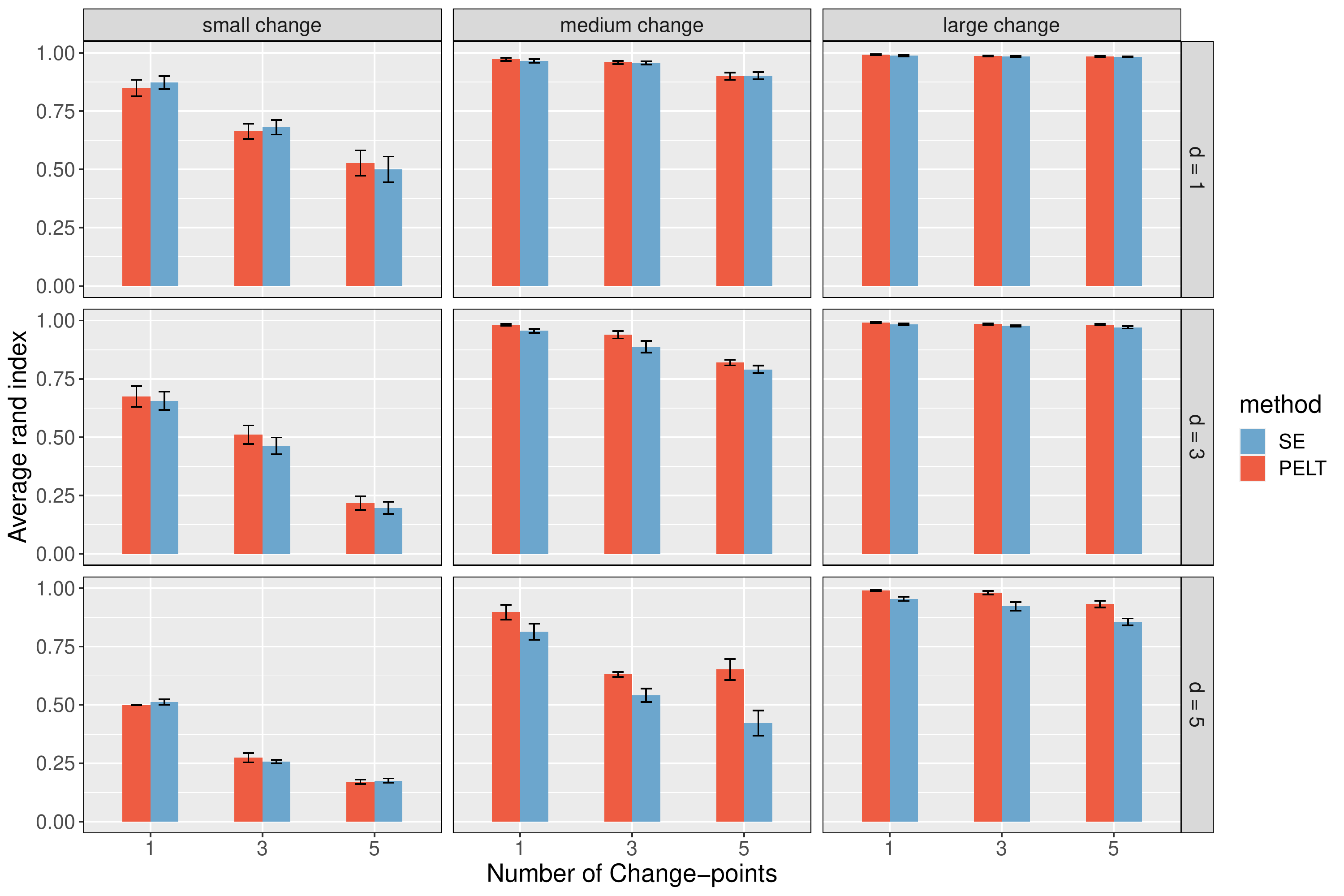}
  \caption{Average rand index}
  \label{fig:2a}
\end{subfigure}
\begin{subfigure}{0.9\textwidth}
  \centering
  \includegraphics[width=1\linewidth]{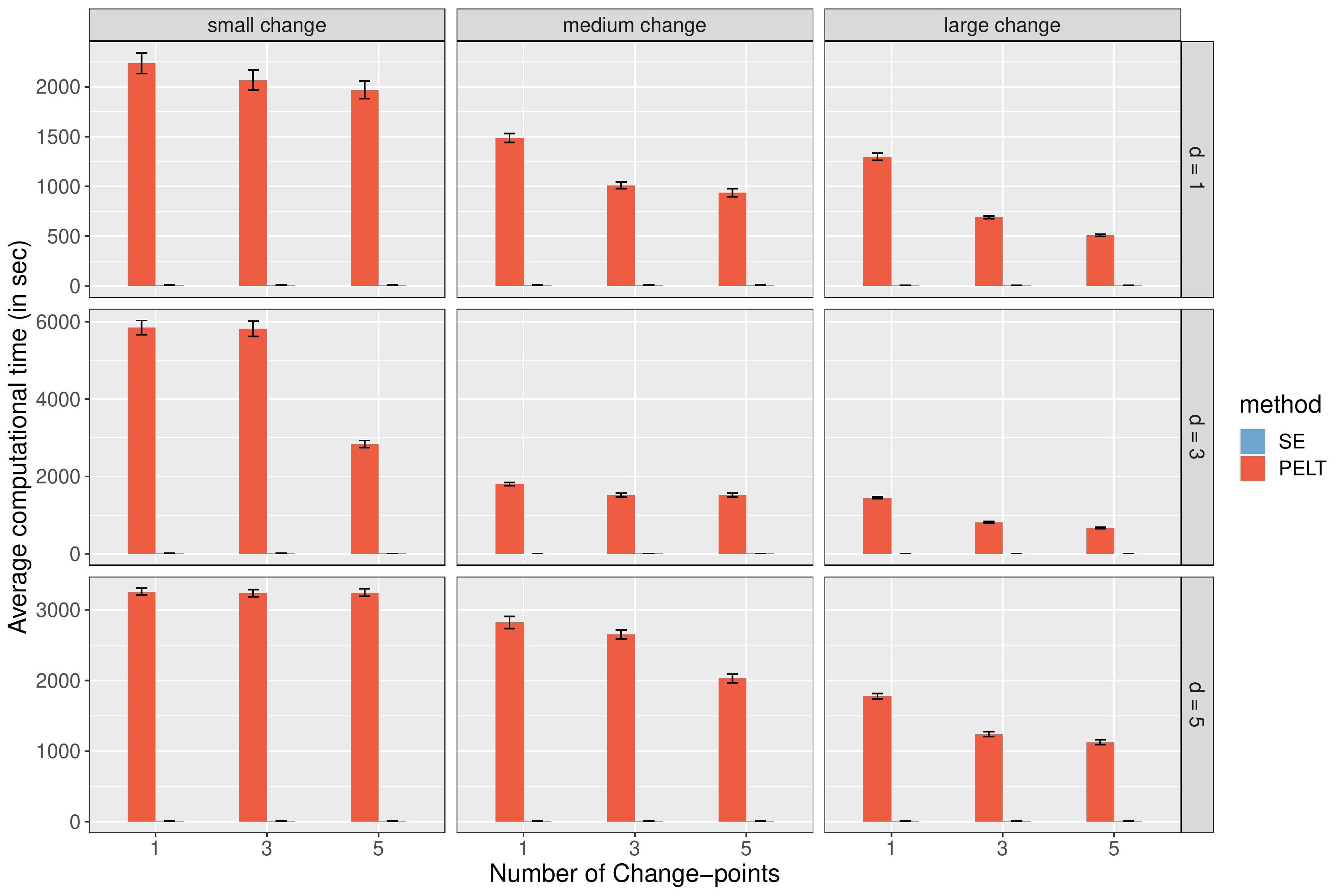}
  \caption{Computational time}
  \label{fig:2b}
\end{subfigure}
\caption{Average rand index and computational time for SE and PELT under Poisson regression models with different number of change-points (1, 3, 5), magnitude of changes (small, medium and large) and dimension $d$ (1, 3, 5). Error bars represent the 95\% CIs ($\pm 2\times \text{standard error}$).}
\label{fig:2}
\end{figure}

\begin{figure}[H]
\centering
\begin{subfigure}{0.9\textwidth}
  \centering
  \includegraphics[width=1\linewidth]{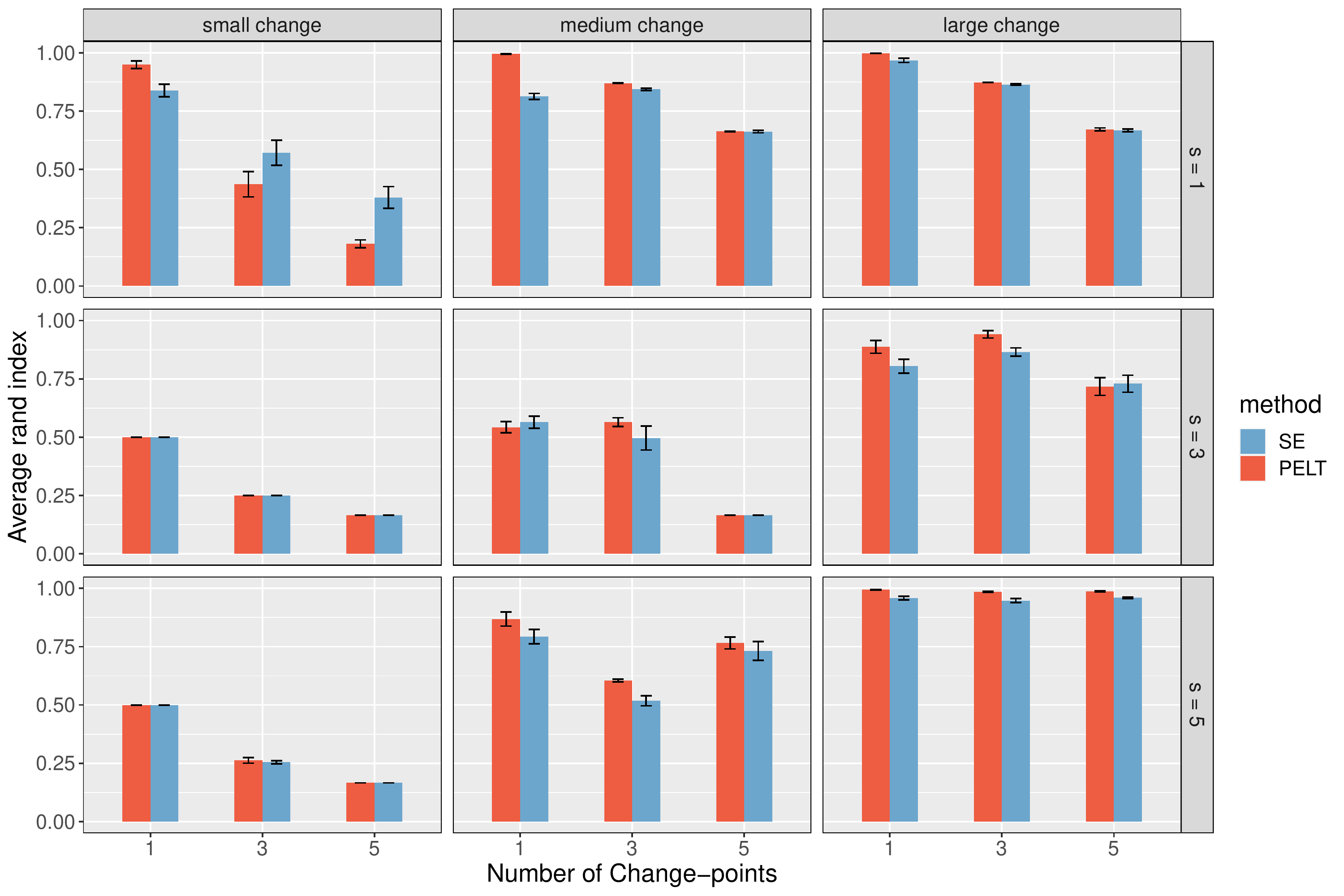}
  \caption{Average rand index}
  \label{fig:3a}
\end{subfigure}
\begin{subfigure}{0.9\textwidth}
  \centering
  \includegraphics[width=1\linewidth]{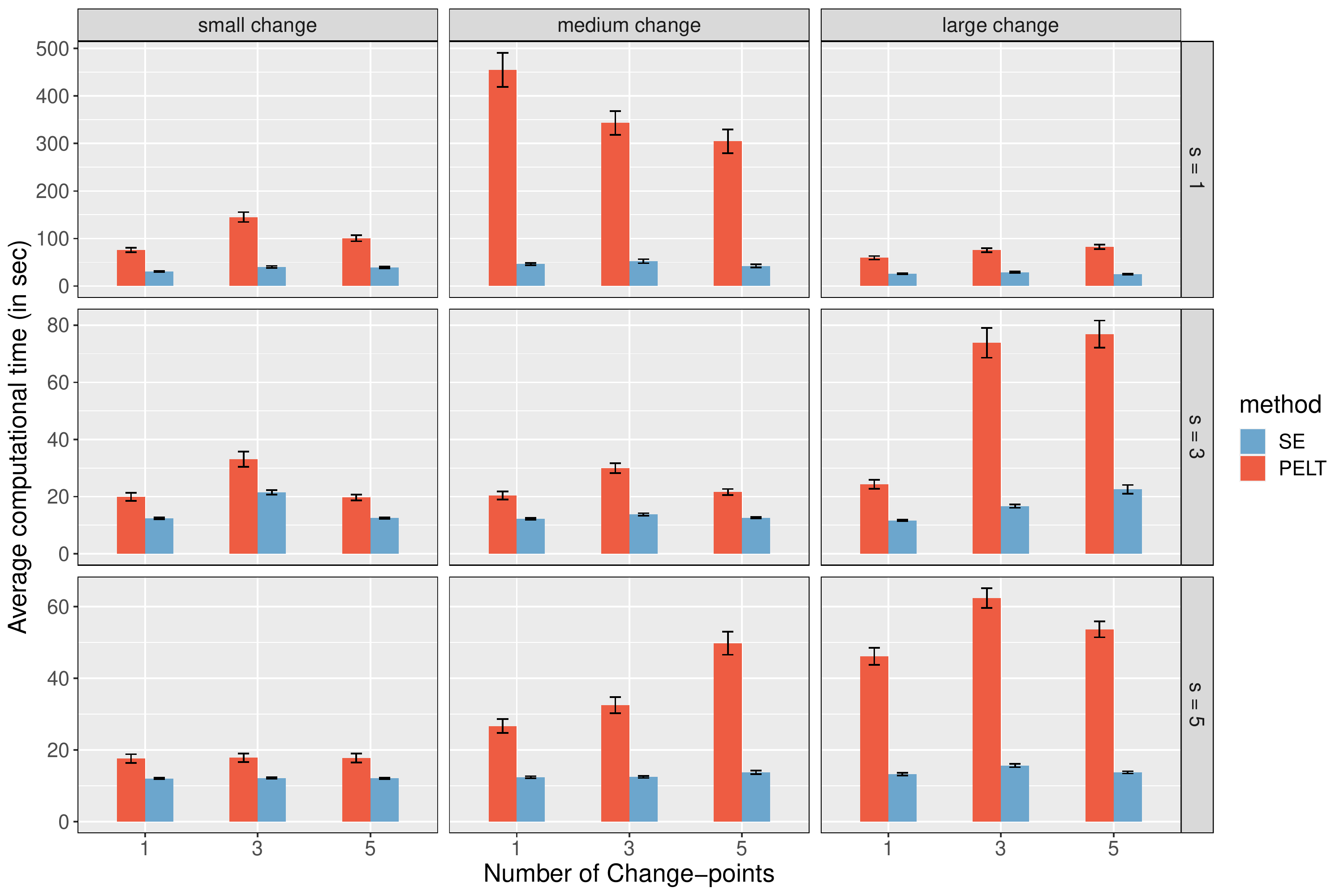}
  \caption{Computational time}
  \label{fig:3b}
\end{subfigure}
\caption{Average rand index and computational time for SE and PELT under penalized linear regression models with different number of change-points (1, 3, 5), magnitude of changes (small, medium and large) and the number of non-zero coefficients $s$ (1, 3, 5). Error bars represent the 95\% CIs ($\pm 2\times \text{standard error}$).}
\label{fig:3}
\end{figure}

\begin{figure}[H]
\centering
\begin{subfigure}{1\textwidth}
  \centering
  \includegraphics[height=6.5cm, width=12cm]{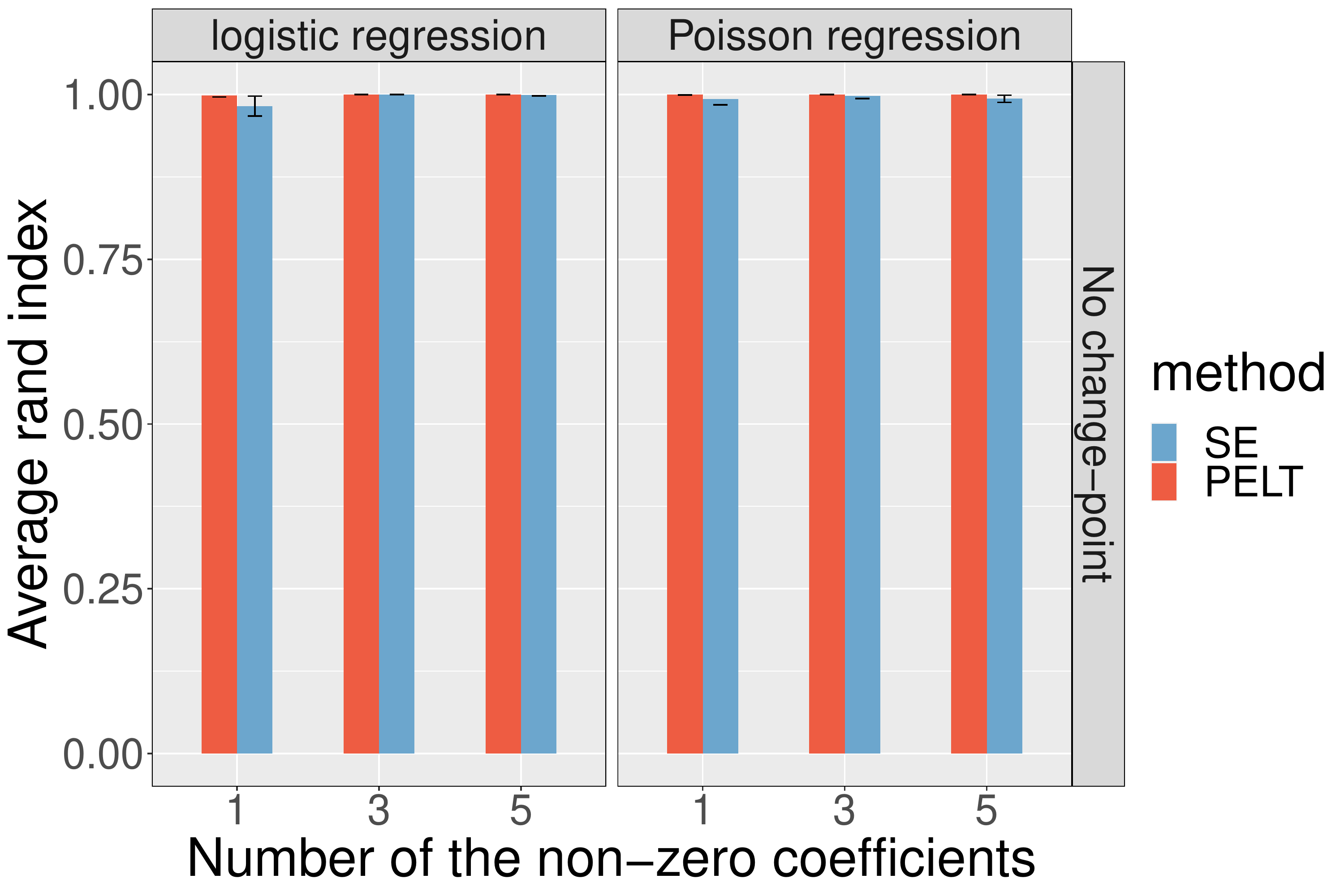}
  \caption{Average rand index}
  \label{fig:4a}
\end{subfigure}
\begin{subfigure}{1\textwidth}
  \centering
  \includegraphics[height=6.5cm, width=12cm]{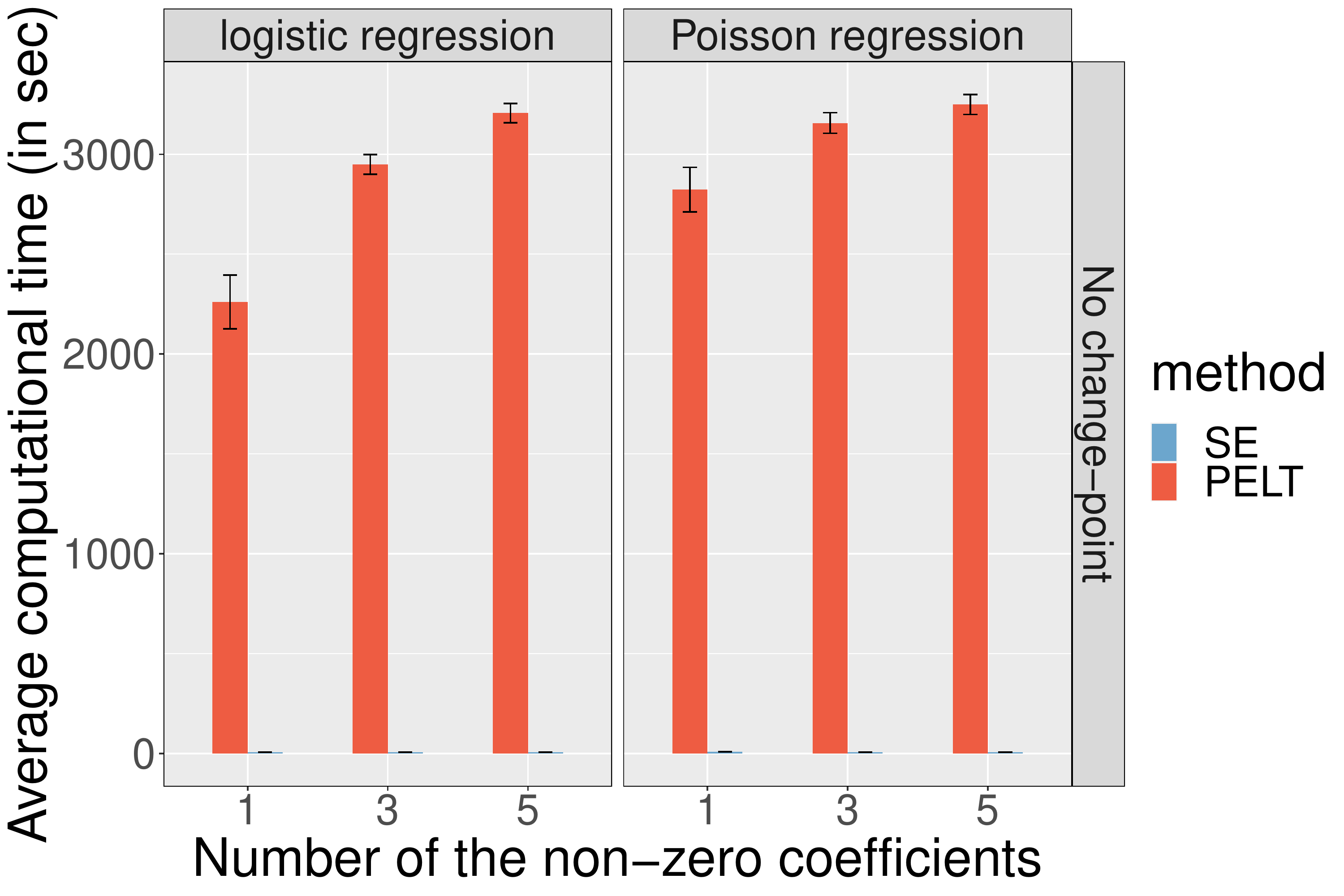}
  \caption{Computational time}
  \label{fig:4b}
\end{subfigure}
\caption{Average rand index and computational time for SE and PELT under logistic and Poisson regression models when there is no change-point. Error bars represent the 95\% CIs ($\pm 2\times \text{standard error}$).}
\label{fig:4}
\end{figure}

\begin{figure}[H]
\centering
  \centering
  \includegraphics[width=0.45\linewidth]{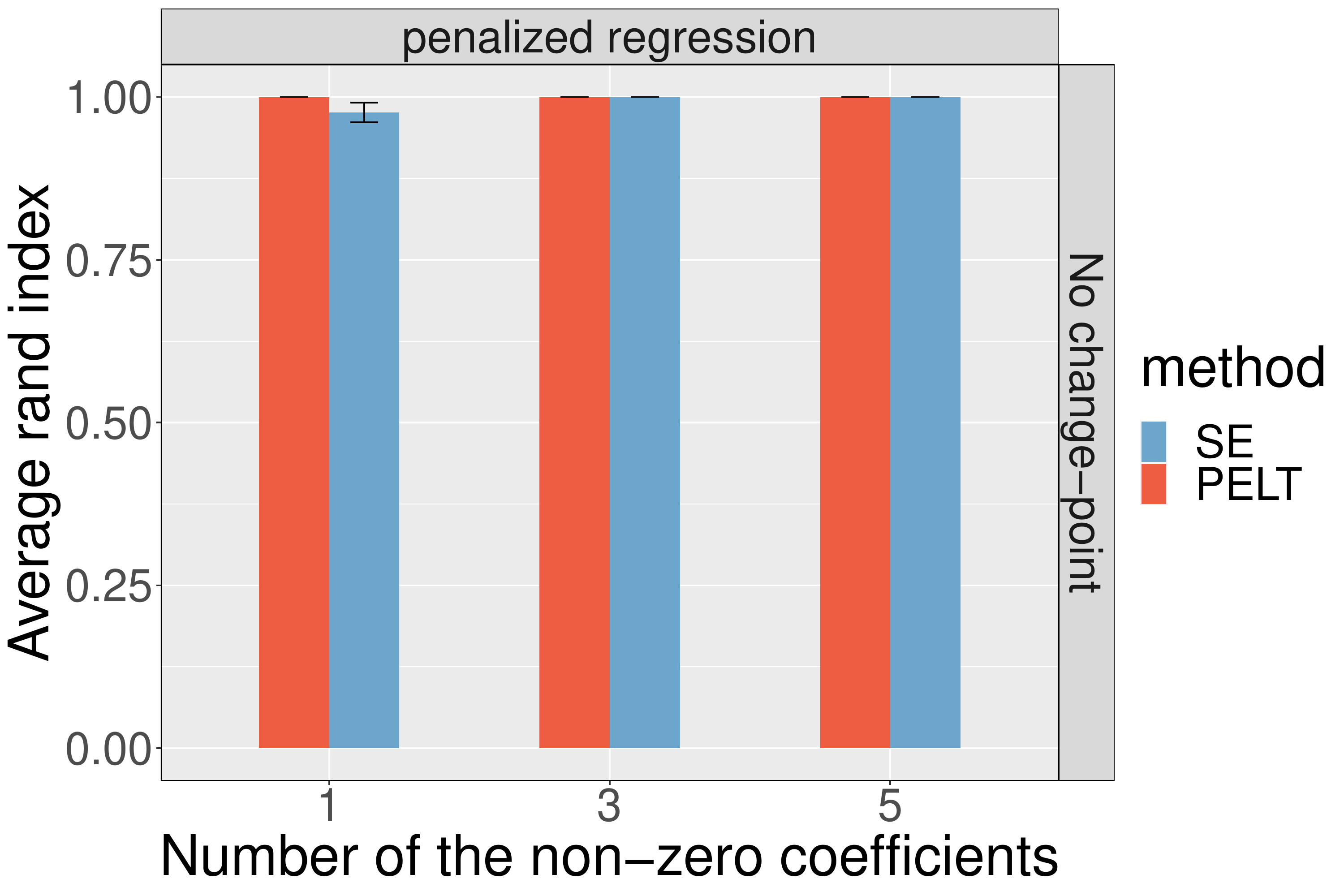}
  \includegraphics[width=0.45\linewidth]{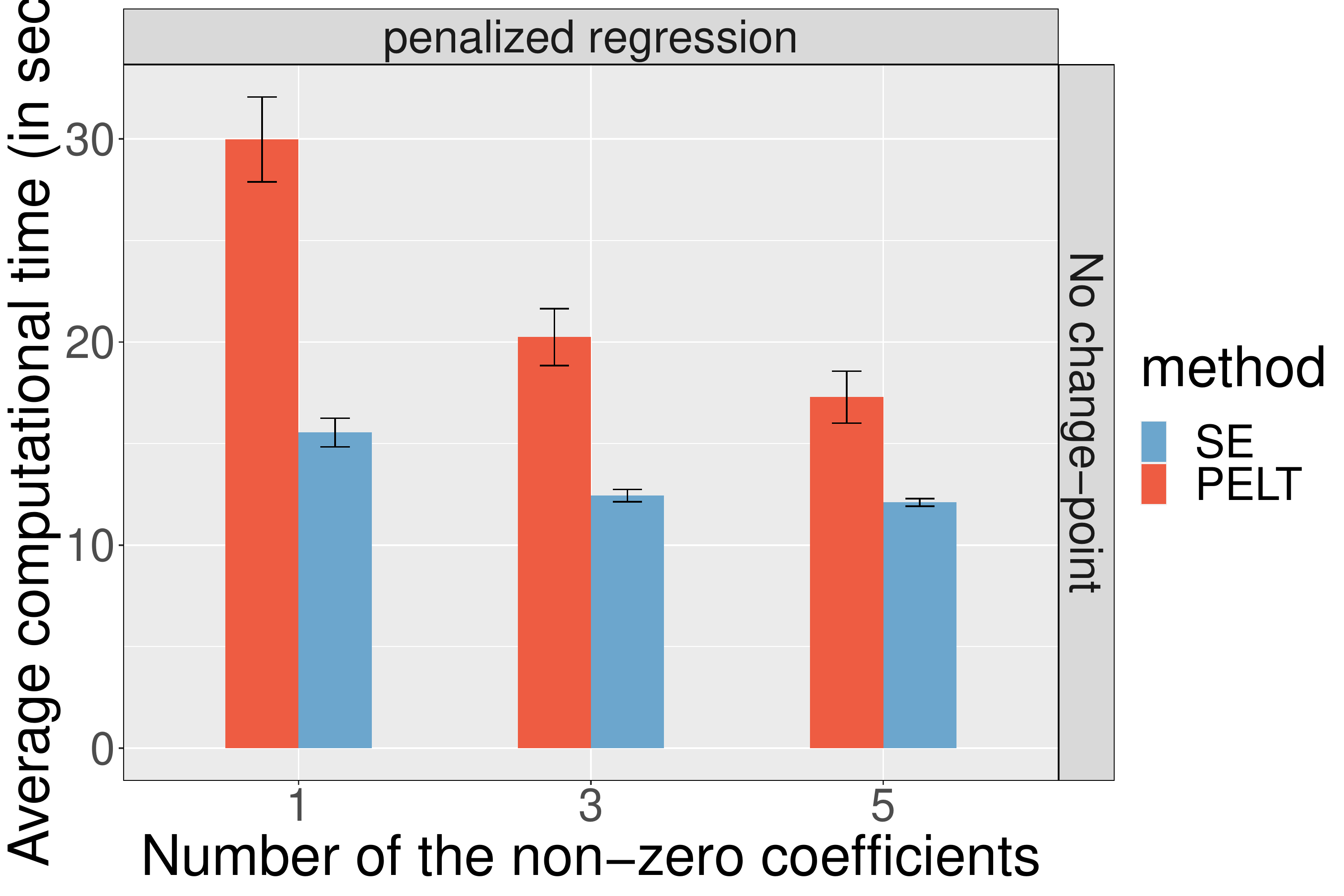}
\caption{Average rand index (left panel) and computational time (right panel) for SE and PELT under penalized regression model when there is no change-point. Error bars represent the 95\% CIs ($\pm 2\times \text{standard error}$).} %{\color{red} Make the font size larger.}
\label{fig:5}
\end{figure}

\section{Appendix}
\subsection{Technical details}
\begin{proof}[Proof of Theorem \ref{thm1}]
Write $\sigma_{a:b}=(\sigma(a),\dots,\sigma(b))$ for $1\leq a\leq b\leq n$. By the definition of the algorithm, 
\begin{align*}
E^*[\|\hat{\theta}_t-\hat{\theta}^*\|^2]
=&E^*[\|\mathcal{P}_{\Theta}(\hat{\theta}_{t-1}-H_{t-1}^{-1}\nabla l_{\sigma_t}(\hat{\theta}_{t-1}))-\hat{\theta}^*\|^2]
\\ \leq & E^*[\|\hat{\theta}_{t-1}-H_{t-1}^{-1}\nabla l_{\sigma_t}(\hat{\theta}_{t-1})-\hat{\theta}^*\|^2]
\\ \leq & E^*[\|\hat{\theta}_{t-1}-\hat{\theta}^*\|^2]+\eta_{t-1}^{-2}C^2-2\eta_{t-1}^{-1}E^*[ \nabla l_{\sigma_t}(\hat{\theta}_{t-1})^\top (\hat{\theta}_{t-1}-\hat{\theta}^*)]
\\ = & E^*[\|\hat{\theta}_{t-1}-\hat{\theta}^*\|^2]+\eta_{t-1}^{-2}C^2-2\eta_{t-1}^{-1}E^*[\{\nabla l_{\sigma_t}(\hat{\theta}_{t-1})-\nabla F_n(\hat{\theta}_{t-1})\}^\top (\hat{\theta}_{t-1}-\hat{\theta}^*)]
\\ &-2\eta_{t-1}^{-1}E^*[ \nabla F_n(\hat{\theta}_{t-1})^\top (\hat{\theta}_{t-1}-\hat{\theta}^*)].
\end{align*}
By Assumption A2, namely the strong convexity, we have
\begin{align*}
\nabla F_n(\hat{\theta}_{t-1})^\top (\hat{\theta}_{t-1}-\hat{\theta}^*)\geq F_n(\hat{\theta}_{t-1})-F_n(\hat{\theta}^*)+\frac{\mu}{2}\|\hat{\theta}^*-\hat{\theta}_{t-1}\|^2,
\end{align*}
which implies that
\begin{align*}
E^*[\|\hat{\theta}_t-\hat{\theta}^*\|^2]
\leq  & E^*[\|\hat{\theta}_{t-1}-\hat{\theta}^*\|^2]+\eta_{t-1}^{-2}C^2-2\eta_{t-1}^{-1}E^*[\{\nabla l_{\sigma_t}(\hat{\theta}_{t-1})-\nabla F_n(\hat{\theta}_{t-1})\}^\top (\hat{\theta}_{t-1}-\hat{\theta}^*)]
\\ &-2\eta_{t-1}^{-1}E^*[F_n(\hat{\theta}_{t-1})-F_n(\hat{\theta}^*)]-\eta_{t-1}^{-1}\mu \|\hat{\theta}^*-\hat{\theta}_{t-1}\|^2.
\end{align*}
Re-arranging the terms, we get
\begin{equation}\label{eq-1}
\begin{split}
E^*[F_n(\hat{\theta}_{t-1})-F_n(\hat{\theta}^*)] 
\leq  & \left(\frac{\eta_{t-1}}{2}-\frac{\mu}{2}\right)E^*[\|\hat{\theta}_{t-1}-\hat{\theta}^*\|^2]-\frac{\eta_{t-1}}{2}E^*[\|\hat{\theta}_t-\hat{\theta}^*\|^2]+\frac{C^2}{2\eta_{t-1}}
\\&+E^*[\{\nabla F_n(\hat{\theta}_{t-1})-\nabla l_{\sigma_t}(\hat{\theta}_{t-1})\}^\top (\hat{\theta}_{t-1}-\hat{\theta}^*)].
\end{split}
\end{equation}
To deal with the last term on the RHS, we consider two cases, namely $t\leq \xi$ and $t>\xi$, separately. Let us first consider the case $t>\xi$. Conditional on $\sigma_{1:t-1}$ and the data values $\{z_i\}^{n}_{i=1}$, $\hat{\theta}_{t-1}$ is fixed. Note that $\hat{\theta}^*$ is independent of any permutation of the data. Moreover,  
$\sigma_t$ is uniformly distributed on $\{1,2,\dots,n\}\setminus\sigma_{1:t-1}=\{\sigma_{t},\dots,\sigma_n\}$. We also note that 
\begin{align*}
E^*[\nabla F_n(\hat{\theta}_{t-1})|\sigma_{1:t-1}]=&\frac{1}{n}\sum^{t-1}_{j=1}\nabla l_{\sigma_j}(\hat{\theta}_{t-1})+\frac{1}{n}\sum^{n}_{j=t}E^*[\nabla l_{\sigma_j}(\hat{\theta}_{t-1})|\sigma_{1:t-1}] 
\\=&\frac{1}{n}\sum^{t-1}_{j=1}\nabla l_{\sigma_j}(\hat{\theta}_{t-1})+\frac{1}{n}\sum^{n}_{j=t}\nabla l_{\sigma_j}(\hat{\theta}_{t-1})
\\=&\nabla F_n(\hat{\theta}_{t-1}).
\end{align*}
Using these facts, we have
\begin{align*}
&E^*[\{\nabla F_n(\hat{\theta}_{t-1})-\nabla l_{\sigma_t}(\hat{\theta}_{t-1})\}^\top (\hat{\theta}_{t-1}-\hat{\theta}^*)]
\\=&E^*[E^*[\{\nabla F_n(\hat{\theta}_{t-1})-\nabla l_{\sigma_t}(\hat{\theta}_{t-1})\}^\top (\hat{\theta}_{t-1}-\hat{\theta}^*)|\sigma_{1:t-1}]]
\\=&E^*\left[\left\{\nabla F_n(\hat{\theta}_{t-1})-(n-t+1)^{-1}\sum^{n}_{j=t}\nabla l_{\sigma_j}(\hat{\theta}_{t-1})\right\}^\top (\hat{\theta}_{t-1}-\hat{\theta}^*)\right]
\\=&\frac{t-1}{n}E^*\left[\left\{\nabla l_{ 1:t-1}(\hat{\theta}_{t-1})-\nabla l_{t:n}(\hat{\theta}_{t-1})\right\}^\top (\hat{\theta}_{t-1}-\hat{\theta}^*)\right]
\\=&\frac{t-1}{n}E^*\left[\|\hat{\theta}_{t-1}-\hat{\theta}^*\|\left\{\nabla l_{ 1:t-1}(\hat{\theta}_{t-1})-\nabla l_{t:n}(\hat{\theta}_{t-1})\right\}^\top \frac{\hat{\theta}_{t-1}-\hat{\theta}^*}{\|\hat{\theta}_{t-1}-\hat{\theta}^*\|}\right]
\\ \leq &\frac{t-1}{n}\sqrt{E^*\left[\|\hat{\theta}_{t-1}-\hat{\theta}^*\|^2\right]}
\sqrt{E^*\left[\left(\sup_{\theta\in\Theta}\left\{\nabla l_{ 1:t-1}(\theta)-\nabla l_{t:n}(\theta)\right\}^\top \frac{\theta-\hat{\theta}^*}{\|\theta-\hat{\theta}^*\|}\right)^2\right]}
\end{align*}
where we have defined $\nabla l_{a:b}(\theta)=\sum^{b}_{i=a}\nabla l_{\sigma_i}(\theta)/(b-a+1)$. 
Applying Lemma \ref{lem-1}, the above expression is at most
\begin{align*}
&\frac{C}{n}\sqrt{E^*\left[\|\hat{\theta}_{t-1}-\hat{\theta}^*\|^2\right]}
\left(
\sqrt{t-1}+\frac{t-1}{\sqrt{n-t+1}}\right)
\\ \leq & \frac{\mu}{4}E^*\left[\|\hat{\theta}_{t-1}-\hat{\theta}^*\|^2\right] + \frac{C^2}{\mu n^2}\left(
\sqrt{t-1}+\frac{t-1}{\sqrt{n-t+1}}\right)^2
\\ \leq & \frac{\mu}{4}E^*\left[\|\hat{\theta}_{t-1}-\hat{\theta}^*\|^2\right] + \frac{2 C^2}{\mu n^2}\left(
t-1+\frac{(t-1)^2}{n-t+1}\right),
\end{align*}
where the first inequality follows from the fact that $\sqrt{ab}\leq \mu a/4 +b/\mu$ and the second inequality is due to $(a+b)^2\leq 2a^2+2b^2.$

Next we consider the case where $t\leq \xi$. Conditional on $\sigma_{1:t-1}$, $\sigma_t$ is uniformly distributed on $\{1,2,\dots,\xi\}\setminus \sigma_{1:t-1}=\{\sigma_t,\dots,\sigma_\xi\}$. Similar arguments show that
\begin{align*}
&E^*[\{\nabla F_n(\hat{\theta}_{t-1})-\nabla l_{\sigma_t}(\hat{\theta}_{t-1})\}^\top (\hat{\theta}_{t-1}-\hat{\theta}^*)]
\\=&E^*\left[\left\{\nabla F_n(\hat{\theta}_{t-1})-(\xi-t+1)^{-1}\sum^{\xi}_{j=t}\nabla l_{\sigma_j}(\hat{\theta}_{t-1})\right\}^\top (\hat{\theta}_{t-1}-\hat{\theta}^*)\right]
\\=&\frac{n-\xi+t-1}{n}E^*\left[\|\hat{\theta}_{t-1}-\hat{\theta}^*\|\left\{\nabla l_{ -(t:\xi)}(\hat{\theta}_{t-1})-\nabla l_{t:\xi}(\hat{\theta}_{t-1})\right\}^\top \frac{\hat{\theta}_{t-1}-\hat{\theta}^*}{\|\hat{\theta}_{t-1}-\hat{\theta}^*\|}\right]
\\ \leq &\frac{n-\xi+t-1}{n}\sqrt{E^*\left[\|\hat{\theta}_{t-1}-\hat{\theta}^*\|^2\right]}
\sqrt{E^*\left[\left(\sup_{\theta\in\Theta}\left\{\nabla l_{ -(t:\xi)}(\theta)-\nabla l_{t:\xi}(\theta)\right\}^\top \frac{\theta-\hat{\theta}^*}{\|\theta-\hat{\theta}^*\|}\right)^2\right]},
\end{align*}
where $\nabla l_{ -(t:\xi)}(\theta)=(n-\xi+t-1)^{-1}\{\sum^{t-1}_{j=1}\nabla l_j(\theta)+\sum^{n}_{j=\xi+1}\nabla l_j(\theta)\}.$ As
\begin{align*}
\nabla l_{ -(t:\xi)}(\theta)-\nabla l_{t:\xi}(\theta)=w(\nabla l_{1:t-1}(\theta)-\nabla l_{t:\xi}(\theta)) 
+ (1-w)(\nabla l_{\xi+1:n}(\theta)-\nabla l_{t:\xi}(\theta))   
\end{align*}
with $w=(t-1)/(n-\xi+t-1)$, we have
\begin{align*}
& E^*\left[\left(\sup_{\theta\in\Theta}\left\{\nabla l_{t:\xi}(\theta)-\nabla l_{ -(t:\xi)}(\theta)\right\}^\top \frac{\theta-\hat{\theta}^*}{\|\theta-\hat{\theta}^*\|}\right)^2\right]
\\ \leq &2w^2 E^*\left[\left(\sup_{\theta\in\Theta}\left\{\nabla l_{t:\xi}(\theta)-\nabla l_{ 1:t-1}(\theta)\right\}^\top \frac{\theta-\hat{\theta}^*}{\|\theta-\hat{\theta}^*\|}\right)^2\right]
\\&+2(1-w)^2 E^*\left[\left(\sup_{\theta\in\Theta}\left\{\nabla l_{t:\xi}(\theta)-\nabla l_{ \xi+1:n}(\theta)\right\}^\top \frac{\theta-\hat{\theta}^*}{\|\theta-\hat{\theta}^*\|}\right)^2\right].
\end{align*}
Similar argument as before gives
\begin{align*}
&E^*[\{\nabla F_n(\hat{\theta}_{t-1})-\nabla l_{\sigma_t}(\hat{\theta}_{t-1})\}^\top (\hat{\theta}_{t-1}-\hat{\theta}^*)]
\\ \leq & \frac{\mu}{4}E^*\left[\|\hat{\theta}_{t-1}-\hat{\theta}^*\|^2\right] + \frac{4C^2}{\mu n^2}
\Bigg\{\left(
\frac{(t-1)^2}{\xi-t+1}+t-1\right)
+
\left(
\frac{(n-\xi)^2}{\xi-t+1}+n-\xi\right)\Bigg\}.
\end{align*}
Using the above bounds and averaging over $2,\dots,n+1$ of (\ref{eq-1}), we obtain
\begin{align*}
&E^*\left[n^{-1}\sum^{n}_{t=1}F_n(\hat{\theta}_{t})-F_n(\hat{\theta}^*)\right] \\
\leq  & \frac{1}{2n}\sum^{n}_{t=1}\left(\eta_{t}-\mu/2-\eta_{t-1}\right)E^*[\|\hat{\theta}_{t}-\hat{\theta}^*\|^2]+\frac{C^2}{2n}\sum^{n}_{t=1}\frac{1}{\eta_t} 
\\&+\frac{4 C^2}{\mu n^3}\sum^{\xi}_{t=2}\Bigg\{\left(
\frac{(t-1)^2}{\xi-t+1}+t-1\right)
+
\left(
\frac{(n-\xi)^2}{\xi-t+1}+n-\xi\right)\Bigg\}
\\&+\frac{4 C^2}{\mu n^3}\sum^{n+1}_{t=\xi+1}\left(
t-1+\frac{(t-1)^2}{n-t+1}\right),
\end{align*}
where $\eta_0=0$ and we have replaced the dummy variable $t-1$ with $t$ in the summation.
Note that
\begin{align*}
&\sum^{\xi}_{t=2}\Bigg\{\left(
\frac{(t-1)^2}{\xi-t+1}+t-1\right)
+
\left(
\frac{(n-\xi)^2}{\xi-t+1}+n-\xi\right)\Bigg\}+\sum^{n+1}_{t=\xi+1}\left(
t-1+\frac{(t-1)^2}{n-t+1}\right)
\\=&\sum^{n}_{t=1}t+\sum^{\xi-1}_{t=1}\Bigg\{
\frac{t^2}{\xi-t}
+
\frac{(n-\xi)^2}{\xi-t}+n-\xi\Bigg\}+\sum^{n}_{t=\xi}\frac{t^2}{n-t}
\\ \leq& C'n^2\log(n),
\end{align*}
for $C'>0$, where we have used the following facts
\begin{align*}
&\sum^{n}_{t=1}t=\frac{(n+1)n}{2}\leq C_1 n^2,\\
&\sum^{\xi-1}_{t=1}\frac{(n-\xi)^2}{\xi-t}\leq C_2n^2\log(n),\\
&\sum^{\xi-1}_{t=1}
\frac{t^2}{\xi-t}\leq (\xi-1)^2\sum^{\xi-1}_{t=1}\frac{1}{t}\leq C_3n^2\log(n),\\
&\sum^{n}_{t=\xi}\frac{t^2}{n-t}\leq n^2\sum^{n}_{t=\xi}\frac{1}{n-t}\leq C_4n^2\log(n),
\end{align*}
for some positive constants $C_i$ with $1\leq i\leq 4.$ Finally, using the definition $\eta_t=\frac{t\mu}{2}$ and the convexity of $F_n$, we have
\begin{align*}
E^*\left[F_n\left(n^{-1}\sum^{n}_{t=1}\hat{\theta}_{t}\right)-F_n(\hat{\theta}^*)\right]\leq  E^*\left[n^{-1}\sum^{n}_{t=1}F_n(\hat{\theta}_{t})-F_n(\hat{\theta}^*)\right]\leq& \frac{c\log(n)}{n},
\end{align*}
for some $c>0.$ The conclusion thus follows.
\end{proof}

The result below follows from Corollary 3 of \cite{shamir2016without}, which is proved using the transductive learning theory.
\begin{lemma}\label{lem-1}
{\rm 
Under Assumptions A5-A6, we have 
\begin{align*}
E^*\left[\left(\sup_{\theta\in\Theta}\left\{\nabla L_{1:a}(\theta)-\nabla L_{a+1:n}(\theta)\right\}^\top \frac{\theta-\hat{\theta}^*}{\|\theta-\hat{\theta}^*\|}\right)^2\right]\leq C^2_1\left(
\frac{1}{\sqrt{a}}+\frac{1}{\sqrt{n-a}}\right)^2,
\end{align*}
where $C_1$ is some constant that depends on $L_1,L_2$ and $D$ (the diameter of $\Theta$).
}
\end{lemma}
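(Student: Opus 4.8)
The plan is to read the left-hand side as a sampling-without-replacement empirical process and to control it by the transductive Rademacher complexity machinery, so that the statement becomes a direct specialization of Corollary~3 of \cite{shamir2016without}. Writing $\nabla L_{1:a}(\theta)=a^{-1}\sum_{i=1}^{a}\nabla l_{\sigma_i}(\theta)$ and $\nabla L_{a+1:n}(\theta)=(n-a)^{-1}\sum_{i=a+1}^{n}\nabla l_{\sigma_i}(\theta)$, the first step is to exploit the additive structure of Assumption~A6. Since $\nabla l_i(\theta)=f'(x_i^\top\theta,y_i)x_i+\nabla r(\theta)$, with $f'$ the partial derivative in the first argument, the term $\nabla r(\theta)$ is common to every index and cancels in the difference $\nabla L_{1:a}(\theta)-\nabla L_{a+1:n}(\theta)$. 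It therefore suffices to control the data-dependent part $g_i(\theta):=f'(x_i^\top\theta,y_i)x_i$. The $L_1$-Lipschitz property of $f$ gives $|f'(\cdot,y)|\leq L_1$, and with $\|x_i\|$ bounded almost surely by some $B$ this yields $\|g_i(\theta)\|\leq L_1 B$ uniformly, which fixes the overall scale of the process.

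Next I would pin down the randomness: the uniform permutation $\sigma$ splits the $n$ fixed vector-valued functions $\{g_i(\cdot)\}_{i=1}^{n}$ into a first block of size $a$ and a second block of size $n-a$, and the quantity of interest is $\sup_{\theta\in\Theta}\big(\nabla L_{1:a}(\theta)-\nabla L_{a+1:n}(\theta)\big)^\top v(\theta)$ with $v(\theta)=(\theta-\hat{\theta}^*)/\|\theta-\hat{\theta}^*\|$ a unit vector. This is exactly the transductive setting of \cite{shamir2016without}: the expected squared supremum is governed by the transductive Rademacher complexity of the associated function class, and the two block sizes enter through the two independent fluctuation scales $1/\sqrt{a}$ and $1/\sqrt{n-a}$, which is precisely the form appearing on the right-hand side. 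Invoking Corollary~3 of \cite{shamir2016without} reduces the claim to bounding this complexity by a constant.

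Controlling that complexity is the step I expect to be the main obstacle. Because $v(\theta)$ is a unit vector, the projected process is dominated by the vector-valued process $\theta\mapsto g_i(\theta)$, and a Lipschitz-contraction argument (of Ledoux--Talagrand type) lets me pass to the complexity of the underlying linear features. Concretely, the $L_2$-smoothness of $f$ makes $f'(\cdot,y)$ $L_2$-Lipschitz, so $\|g_i(\theta)-g_i(\theta')\|\leq L_2\|x_i\|^2\|\theta-\theta'\|\leq L_2 B^2\|\theta-\theta'\|$; thus $\theta\mapsto g_i(\theta)$ is $L_2B^2$-Lipschitz and bounded by $L_1B$, and it is built from the one-dimensional linear maps $\theta\mapsto x_i^\top\theta$. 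Combining the contraction with the compactness of $\Theta$ (diameter $D$ by Assumption~A5) bounds the Rademacher complexity by a constant depending only on $L_1,L_2,D$ and the almost-sure bound on $\|x_i\|$, which is absorbed into the stated $C_1$. With the complexity so controlled, the transductive second-moment estimate of \cite{shamir2016without} delivers the claimed bound, and the remaining manipulations are routine.
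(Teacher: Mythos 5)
Your proposal is correct and takes essentially the same route as the paper: the paper's entire proof of this lemma is the citation of Corollary~3 of \cite{shamir2016without}, whose transductive-Rademacher machinery is exactly what you invoke, and Assumption~A6 (the form $l_i(\theta)=f(x_i^\top\theta,y_i)+r(\theta)$ with Lipschitz, smooth $f$ and bounded $\|x_i\|$) is stated precisely so that the lemma is a direct specialization of that corollary. The contraction and complexity-bounding step you flag as the main obstacle is already carried out inside Shamir's Corollary~3, so no additional argument is required beyond the verification of its hypotheses that you give.
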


\subsection{Simulation settings}\label{sec:sim-set}
The tables below summarize the values of the regression coefficients $\theta_i$ (for both the logistic and Poisson regressions) within each segment partitioned by the change-point locations $\{\tau_i\}$.

\begin{itemize}
    \item Single change-point ($k=1$): $\tau_1=750$ and
    \begin{center}
\begin{tabular}{  l | l l  l  } 
  \hline
  & $d = 1$ & $d = 3$ & $d = 5$\\
  \hline
  $1\leq i\leq\tau_1$ & $1.2$ & $(1, 1.2, -1)$ & $(1, 1.2, -1, 0.5, -2)$ \\
  $\tau_1<i\leq T$ & 
   $1.2 + \delta_1$ & $ (1, 1.2, -1)+ \delta_3$ & $ (1, 1.2, -1, 0.5, -2)+ \delta_5$\\
  \hline
\end{tabular}
\end{center}

    \item Three change-points ($k=3$): $\tau_1 = 375, \tau_2 = 750, \tau_3 = 1125$ and 
   % {\color{red} Redo this Table following the format of the table above:}
\begin{center}
\begin{tabular}{  l | l | l | l } 
  \hline
   & $d = 1$ & $d = 3$ & $d = 5$\\
  \hline
  $1\leq i\leq \tau_1$ & 
  $1.2$ & $(1, 1.2, -1)$ & $(1, 1.2, -1, 0.5, -2)$ \\
  
  $\tau_1 < i \leq \tau_2$ & 
  $1.2 + \delta_1$ & $ (1, 1.2, -1)+ \delta_3$ & $ (1, 1.2, -1, 0.5, -2)+ \delta_5$\\
  
  $\tau_2 < i \leq \tau_3$ & 
  $1.2$ & $ (1, 1.2, -1)$ & $ (1, 1.2, -1, 0.5, -2)$\\
  
  $\tau_3 < i \leq T$ &
  $1.2 - \delta_1$ & $ (1, 1.2, -1)- \delta_3$ & $ (1, 1.2, -1, 0.5, -2)- \delta_5$\\
  \hline
\end{tabular}
\end{center}

    \item Five change-points ($k=5$): $\tau_1 = 250, \tau_2 = 500, \tau_3 = 750, \tau_4 = 1000, \tau_5 = 1250$ and 
%    {\color{red} Redo this Table following the format of the table above:}
\begin{center}
\begin{tabular}{ l | l | l | l } 
  \hline
   & $d = 1$ & $d = 3$ & $d = 5$\\
  \hline
  $1\leq i\leq \tau_1$ & 
  $1.2$ & $(1, 1.2, -1)$ & $(1, 1.2, -1, 0.5, -2)$ \\
  
  $\tau_1 < i \leq \tau_2$ & 
  $1.2 + \delta_1$ & $ (1, 1.2, -1)+ \delta_3$ & $ (1, 1.2, -1, 0.5, -2)+ \delta_5$\\
  
  $\tau_2 < i \leq \tau_3$ & 
  $1.2$ & $ (1, 1.2, -1)$ & $ (1, 1.2, -1, 0.5, -2)$\\
  
  $\tau_3 < i \leq \tau_4$ &
  $1.2 - \delta_1$ & $ (1, 1.2, -1)- \delta_3$ & $ (1, 1.2, -1, 0.5, -2)- \delta_5$\\
  
  $\tau_4 < i \leq \tau_5$ &
  $1.2$ & $ (1, 1.2, -1)$ & $ (1, 1.2, -1, 0.5, -2)$\\

  $\tau_5 < i \leq T$ &
  $1.2 + \delta_1$ & $ (1, 1.2, -1)+ \delta_3$ & $ (1, 1.2, -1, 0.5, -2)+ \delta_5$\\
  \hline
\end{tabular}
\end{center}
\end{itemize}

\subsection{Multiple epochs}\label{sec:ext}
This section describes an extension of Algorithm \ref{alg1} to allow multiple epochs. Specifically, we will use each data point $K\geq 1$ times in updating the parameter estimates for a particular segment.
The details are summarized in Algorithm \ref{alg2} below. SE with multiple epochs/passes allows more efficient use of the data with an additional computational expense controlled by $K$. We leave a detailed analysis of this trade-off between statistical and computational efficiencies for future investigation.

\begin{algorithm}
\caption{Sequential Updating Algorithm with Multiple Epochs}\label{alg2}
\begin{itemize}
    \item Input the data $\{z_i\}^{T}_{i=1}$, the individual cost function $l(\cdot,\theta)$, the penalty constant $\beta$ and the number of epochs $K$.
    \item Set $F(0)=-\beta$, $\mathcal{C}=\emptyset$ and $R_1=\{0\}$.
    \item Iterate for $t=1,2,\dots,T$:
    \begin{enumerate}
        \item Initialize $S_{t:t}^{(K)}=\hat{\theta}_{t:t}^{(K)}$ and $H_{t:t}^{(K)}$. For $\tau\in R_t\setminus\{t-1\}$, perform the update
        \begin{align*}
        &\hat{\theta}_{\tau+1:t}^{(1,t)}=\mathcal{P}_{\Theta}(\hat{\theta}_{\tau+1:t-1}^{(K)}-H_{\tau+1:t-1}^{(K),-1}\nabla l(z_t,\hat{\theta}_{\tau+1:t-1}^{(K)})),\\
        &H_{\tau+1:t}^{(1,t)}=H_{\tau+1:t-1}^{(K)}+ \mathcal{A}(\hat{\theta}_{\tau+1:t}^{(1,t)}).
        \end{align*}
        Next for $k=2,\dots,K$, perform the update
        \begin{align*}
        &\hat{\theta}_{\tau+1:j}^{(k,t)}=\mathcal{P}_{\Theta}(\hat{\theta}_{\tau+1:j-1}^{(k,t)}-H_{\tau+1:j-1}^{(k,t),-1}\nabla l(z_j,\hat{\theta}_{\tau+1:j-1}^{(k,t)})),\\
        &H_{\tau+1:j}^{(k,t)}=H_{\tau+1:j-1}^{(k,t)}+ \mathcal{A}(\hat{\theta}_{\tau+1:j}^{(k,t)}),
        \end{align*}
        over $j=\tau+1,\dots,t$, where $(\hat{\theta}^{(k,t)}_{\tau+1:\tau},H_{\tau+1:\tau}^{(k,t)})=(\hat{\theta}^{(k-1,t)}_{\tau+1:t}),H_{\tau+1:t}^{(k-1,t)})$.
        Set $\hat{\theta}_{\tau+1:t}^{(K)}=\hat{\theta}_{\tau+1:t}^{(K,t)}$, $H_{\tau+1:t}^{(K)}=H_{\tau+1:t}^{(K,t)}$ and 
        $$S_{\tau+1:t}^{(K)}=S_{\tau+1:t-1}^{(K)}+\hat{\theta}_{\tau+1:t}^{(K)}.$$
        \item For each $\tau\in R_t$, compute
        $$\widehat{C}(\bfz_{\tau+1:t})=\sum^{t}_{i=\tau+1}l\left(z_i,(t-\tau)^{-1}S_{\tau+1:t}^{(K)}\right).$$
        \item Calculate
        \begin{align*}
        & F(t)=\min_{\tau\in R_{t}}\left\{F(\tau)+\widehat{C}(\bfz_{\tau+1:t})+\beta\right\},\\
        & \tau^*=\argmin_{\tau\in R_{t}}\left\{F(\tau)+\widehat{C}(\bfz_{\tau+1:t})+\beta\right\}.
        \end{align*}
        \item Let $\mathcal{C}(t)=\{\mathcal{C}(\tau^*),\tau^*\}$.
        \item Set 
        $$R_{t+1}=\left\{\tau \in R_t\cup\{t\}:F(\tau)+C(\bfz_{\tau+1:t})\leq F(t)\right\}.$$
    \end{enumerate}
       \item Output $\mathcal{C}(T)$.
\end{itemize}
\end{algorithm}

\end{document}